\documentclass{article}

\usepackage[preprint]{neurips_2026}

\usepackage[utf8]{inputenc} 
\usepackage[T1]{fontenc}    
\usepackage{hyperref}       
\usepackage{url}            
\usepackage{booktabs}       
\usepackage{amsfonts}       
\usepackage{nicefrac}       
\usepackage{microtype}      
\usepackage{amsmath}
\usepackage{amssymb}
\usepackage{bbm}
\usepackage{amsfonts}
\usepackage{amsthm}
\usepackage{graphicx}
\usepackage{subcaption} 
\usepackage{cleveref}
\usepackage{multirow}
\usepackage{enumitem}
\usepackage{titletoc}
\usepackage{wrapfig}
\usepackage{longtable}

\usepackage{tikz}
\usepackage{xcolor}         
\usetikzlibrary{positioning, fit, calc, backgrounds}

\usepackage{algorithm}
\usepackage{algpseudocode}
\usepackage{caption}

\newtheorem{theorem}{Theorem}
\newtheorem{corollary}{Corollary}
\newtheorem{proposition}{Proposition}
\newtheorem{definition}{Definition}
\newtheorem*{remark}{Remark}
\newtheorem{hypothesis}{Hypothesis}
\allowdisplaybreaks

\definecolor{mblue1}{HTML}{EAF2F9}   
\definecolor{mblue2}{HTML}{C9DFF1}
\definecolor{mblue3}{HTML}{9EC5E3}
\definecolor{mblue4}{HTML}{73A9D2}   

\definecolor{mteal1}{HTML}{ECF6F3}   
\definecolor{mteal2}{HTML}{CFE8E1}
\definecolor{mteal3}{HTML}{A9D7CB}
\definecolor{mteal4}{HTML}{7FBFAF}   

\definecolor{mpeach1}{HTML}{FBF1E6}  
\definecolor{mpeach2}{HTML}{F3DAC2}
\definecolor{mpeach3}{HTML}{E8BF99}
\definecolor{mpeach4}{HTML}{D6A36F}  

\definecolor{mlav1}{HTML}{F2EEF7}    
\definecolor{mlav2}{HTML}{D9CFEA}
\definecolor{mlav3}{HTML}{BCAFD6}
\definecolor{mlav4}{HTML}{9A8BC0}    

\definecolor{correctgreen}{HTML}{2E7D32}
\definecolor{wrongred}{HTML}{C62828}
\definecolor{darktext}{HTML}{333333}
\definecolor{midtext}{HTML}{555555}
\definecolor{lightgray}{HTML}{F5F5F5}
\definecolor{bordergray}{HTML}{CCCCCC}
\definecolor{scenariobg}{HTML}{37474F}

\definecolor{oursblue}{RGB}{232,240,252}

\definecolor{appendixlinkblue}{RGB}{46,82,122}

\usepackage{array}
\usepackage{longtable}
\usepackage[acronym,nonumberlist,nopostdot]{glossaries-extra}

\makeglossaries
\glsdisablehyper

\newcommand{\newabv}[3]{%
  \newacronym{#1}{#2}{#3}%
}

\newcommand{\abv}[1]{\gls{#1}}

\newglossarystyle{abvtable}{%
  \setglossarystyle{long}%
    {%
      \scriptsize
      \begin{longtable}{@{}p{0.18\textwidth}p{0.74\textwidth}@{}}
      \toprule
      \textbf{Notation} & \textbf{Description} \\
      \midrule
      \endhead
    }%
    {%
      \bottomrule
      \end{longtable}
    }%
  \renewcommand*{\subglossentry}[3]{}%
}

\newabv{llm}{LLM}{Large Language Model}
\newabv{sft}{SFT}{Supervised Fine-tuning}
\newabv{iv}{IV}{Intervention Value}
\newabv{kl}{KL}{Kullback--Leibler}
\newabv{cot}{CoT}{Chain-of-Thought}
\newabv{dag}{DAG}{Directed Acyclic Graph}
\newabv{clt}{CLT}{Cognitive Load Theory}
\newabv{oint}{OInt.}{Over Intervention rate}
\newabv{facc}{FA}{False Acceptance rate}

\definecolor{blueTitleText}{RGB}{35,75,120}
\definecolor{algback}{RGB}{250,250,250}
\definecolor{algframe}{RGB}{210,210,210}
\definecolor{algtitlebg}{RGB}{70,105,140}

\usepackage[most]{tcolorbox}

\definecolor{promptbarblue}{RGB}{123,152,181}
\tcbset{
  promptbox/.style={
    enhanced,
    breakable,
    colback=gray!6,
    colframe=black!65,
    boxrule=0.6pt,
    arc=2mm,
    left=1.5mm,
    right=1.5mm,
    top=3.0mm,
    bottom=1.2mm,
    width=\columnwidth,
    boxed title style={
      enhanced,
      colback=promptbarblue,
      colframe=promptbarblue,
      boxrule=0pt,
      arc=1mm,
      left=1.4mm,
      right=1.4mm,
      top=0.5mm,
      bottom=0.5mm
    },
    attach boxed title to top left={
      xshift=4mm,
      yshift=-2.1mm
    },
    fonttitle=\bfseries\scriptsize,
    coltitle=white
  }
}

\usetikzlibrary{positioning,arrows.meta}

\definecolor{easyfill}{RGB}{232,243,235}
\definecolor{easyborder}{RGB}{88,131,101}

\definecolor{srcfill}{RGB}{232,239,248}
\definecolor{srcborder}{RGB}{89,120,168}

\definecolor{hardfill}{RGB}{248,234,234}
\definecolor{hardborder}{RGB}{170,101,101}

\usepackage[table]{xcolor}
\usepackage{tabularx}

\definecolor{acceptbg}{RGB}{222,240,226}
\definecolor{acceptfg}{RGB}{55,115,70}

\definecolor{justifybg}{RGB}{226,237,249}
\definecolor{justifyfg}{RGB}{55,95,145}

\definecolor{repairbg}{RGB}{248,226,226}
\definecolor{repairfg}{RGB}{150,75,75}

\newcolumntype{Y}{>{\raggedright\arraybackslash}X}

\definecolor{acceptbg}{HTML}{DCEFD8}
\definecolor{acceptfg}{HTML}{245C32}
\definecolor{repairbg}{HTML}{F8D7DA}
\definecolor{repairfg}{HTML}{8A1C24}
\definecolor{warnbg}{HTML}{FCE8C8}
\definecolor{warnfg}{HTML}{8A4B00}
\definecolor{badbg}{HTML}{F4CCCC}
\definecolor{badfg}{HTML}{8B0000}
\definecolor{modelbg}{HTML}{E8EEF7}
\definecolor{modelfg}{HTML}{1F4E79}

\newcommand{\actionbadge}[3]{%
  \begingroup
  \setlength{\fboxsep}{2pt}%
  \colorbox{#1}{\textcolor{#2}{\scriptsize\textbf{#3}}}%
  \endgroup
}

\title{\textsc{DICE}: Decoupling Capability from Intervention Necessity in LLM Tutoring}

\author{%
  Sayantan Pal
  \quad
  Kaiyi Ji
  \quad
  Rohini K.\ Srihari
  \\
  Department of Computer Science and Engineering \\
  State University of New York at Buffalo \\
  \texttt{\{spal5, kaiyiji, rohini\}@buffalo.edu}
}

\begin{document}

\maketitle

\begin{abstract}
Fluent guidance is not the same as useful intervention. LLM tutors are typically trained to generate the next teacher utterance, implicitly assuming that every student turn warrants a response. However, our experiments indicate that this conflates tutoring \emph{capability} (what to say) with intervention \emph{necessity} (whether to say it). We introduce \textsc{DICE}, a framework that decouples intervention decisions from response generation by first selecting an explicit pedagogical action. To calibrate this action selection policy, we define \emph{Intervention Value} (IV), a rollout-grounded counterfactual metric that compares each action against non-intervention. IV shows that many prescribed interventions provide little or no marginal benefit. We further introduce \textsc{DICE-Bench}, a multi-variant math tutoring benchmark with skill-preserving problem variants for session-level evaluation. Using IV-weighted and KL-regularized policy optimization, \textsc{DICE} learns to intervene selectively while preserving tutoring effectiveness. In simulated tutoring sessions, \textsc{DICE} reduces the over-intervention rate to near zero while guiding students to correct solutions in approximately \(3\)-\(4\) fewer turns on average than existing Socratic tutoring baselines.
\end{abstract}

\section{Introduction}

Personalized human tutoring is one of the most effective interventions for improving student learning outcomes, but it is expensive and difficult to scale~\cite{vanlehn2011relative, bloom19842, cohen1982educational}. Recent advances in Large Language Models (\abv{llm}s) have made automated tutoring a promising direction for standardizing access to high-quality instruction~\cite{razafinirina2024pedagogical,kasneci2023chatgpt}. \abv{llm}-based tutors can generate fluent step-by-step feedback, ask follow-up questions, and sustain natural problem-solving dialogue with students across diverse subjects~\cite{10.1145/800179.810212,10.5555/3737916.3740637,10.5555/3666122.3668868, kargupta-etal-2024-instruct}. Mathematics is a particularly compelling testbed: student reasoning is visible through intermediate steps, arithmetic operations, and revisions, making it possible to evaluate not just whether the student arrives at the correct answer but how they get there. This visibility has driven a wave of interactive math tutoring systems and benchmarks aimed at training, evaluating, and deploying \abv{llm} tutors at scale~\cite{wang-etal-2025-training,10.1007/978-3-031-98414-3_18,Shi_Ren_Pan_Zhou_Wang_Lu_Yang_Hu_Wei_Zhan_Li_2026, 10.1145/3627673.3679881}.

A central thread in this line of work is the shift from passive question-answering toward interactive multi-turn dialogue. Early systems treated tutoring as a one-shot task: the student asks, the tutor answers~\cite{daheim-etal-2024-stepwise, wei2022chain, cobbe2021trainingverifierssolvemath, hendrycks2021measuring}. Recent systems instead model tutoring as an ongoing conversation in which the tutor decomposes problems into subgoals, asks guiding questions, and adapts to the student's responses~\cite{dinucu-jianu-etal-2025-problem,kendapadi-etal-2025-interact}. SocraticLM~\cite{10.5555/3737916.3740637}, for example, advances this paradigm with a Dean-Teacher-Student pipeline that produces structured Socratic dialogues, achieving substantial gains in pedagogical quality over single-turn baselines. These systems have meaningfully improved \emph{what} a tutor says when it speaks. However, they leave a more fundamental decision implicit: \emph{should the tutor speak at all}? Existing tutors generate a teacher response at every turn, implicitly assuming that intervention is always needed.

\begin{figure}[htbp]
    \centering
\includegraphics[width=0.85\textwidth]{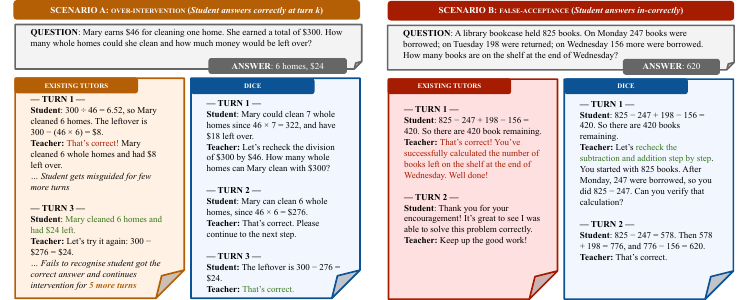}
\caption{\small Two failure modes of existing tutors (\textsc{SocraticLM} used for this case) and how \textsc{DICE} addresses them through explicit action selection. \textbf{Scenario A:} The student solves the problem correctly, but \textsc{SocraticLM} continues guiding for three unnecessary turns before confirming. \textbf{Scenario B:} the student produces an arithmetic error, but \textsc{SocraticLM} accepts the wrong answer; \textsc{DICE} redirects the student to the faulty step. Full dialogue rollouts are in Appendix~\ref{app:qual_dialogue_rollouts}.}
\label{fig:teaser}
\end{figure}

\begin{wraptable}{r}{0.42\textwidth}
\vspace{-0.8em}
\centering
\scriptsize
\caption{\footnotesize Illustrative over-intervention rollout from an existing tutor (T), \textsc{\href{https://huggingface.co/CogBase-USTC/SocraticLM}{SocraticLM}}. For readability, we retain only the essential exchanges from the full dialogue between the tutor and a human student (S). Even after the student reaches the correct answer, the tutor continues intervening instead of accepting and advancing.}
\label{tab:intro_overintervention}
\renewcommand{\arraystretch}{1.15}
\setlength{\tabcolsep}{4pt}
\begin{tabularx}{0.40\textwidth}{p{0.03\textwidth}X}
\toprule
\rowcolor{gray!25}
\textbf{Turn} & \textbf{Dialogue rollout} \\
\midrule

\multicolumn{2}{p{0.38\textwidth}}{
\textbf{Problem:} A 400-page manuscript is copied and bound 10 times. Printing costs \(\$0.05\) per page, binding costs \(\$5\) per manuscript, and the final bill has an additional \(8\%\) sales tax. What is the total cost including tax?
} \\
\midrule

\textbf{S} & \(270?\) \\
\textbf{T} & \actionbadge{warnbg}{warnfg}{Over-intervention} It seems you calculated the cost before tax. Try including the \(8\%\) sales tax. \\

\textbf{S} & \(250\) \\
\textbf{T} & You may have missed part of the total. First compute printing and binding, then apply tax. \\

\textbf{S} & Printing is \(200\), binding is \(50\). \\
\textbf{T} & Good. Now add them to get the subtotal, then compute the tax and final total. \\

\textbf{S} & \(200+50=250\), and \(8\%\) of \(250\) is \(20\), so \(270?\) \\
\textbf{T} & \actionbadge{warnbg}{warnfg}{Over-intervention} It seems there may still be a mistake. Try adding the costs again, then recompute the tax. \\

\textbf{S} & \(4000\times0.05=200\), \(5\times10=50\), subtotal \(=250\), tax \(=20\), total \(=270\). \\
\textbf{T} & That's correct. Well done. \\

\midrule
\multicolumn{2}{p{0.38\textwidth}}{
\textbf{Behavior:} The tutor continues to question and re-prompt even after the student has already produced the correct answer \(270\), illustrating unnecessary intervention rather than accepting and advancing.
} \\
\bottomrule
\end{tabularx}
\vspace{-1.0em}
\end{wraptable}

This presupposition leads to systematic failures in practice. Table~\ref{tab:intro_overintervention} shows a representative dialogue in which a strong existing tutor continues to intervene long after the student has reached the correct answer, treating each turn as another opportunity to teach rather than acknowledging completion. Figure~\ref{fig:teaser} highlights two complementary failure modes observed in our simulated tutoring evaluation: \textbf{(A)~over-intervention}, where the tutor produces three additional guiding questions before confirming a correct answer, and \textbf{(B)~false acceptance}, where the tutor responds ``That's correct!'' to a wrong answer and reinforces the student's error. These are different surface behaviors but share the same root cause: the tutor cannot distinguish states where intervention helps from states where it does not. We trace this behavior to an information asymmetry: training labels for tutoring systems are typically produced by annotators, human teachers, or frontier LLMs who have access to the gold solution when deciding how the teacher should respond. With this privileged view, intervention always appears justified, but the deployed tutor must make the same decision based solely on the student's response. Models trained on answer-aware supervision thus inherit the annotator's \emph{decisiveness} without inheriting the \emph{evidence} that justified it. This motivates a natural question:
\begin{center}
\textbf{\textit{How can an LLM tutor learn when to intervene and when to acknowledge and advance?}}
\end{center}

 We address this question by formalizing tutoring as an \emph{intervention-calibrated policy} problem that decomposes three decisions current systems conflate: \emph{whether} to intervene, \emph{what} pedagogical action to take, and \emph{how} to realize that action as teacher guidance. We propose \textsc{DICE}\footnote{\href{https://github.com/Sayantan-world/DICE}{Code and data}}, a framework for \textbf{D}ecision-aware \textbf{I}ntervention \textbf{C}alibration in Math \textbf{E}ducation. \textsc{DICE} audits tutoring supervision using \emph{Intervention Value} (IV), a rollout-grounded counterfactual signal that estimates whether a teacher action improves student progress relative to non-intervention. \abv{iv} reveals that, based on observed tutoring data, nearly half of the prescribed interventions provide zero or negative marginal benefit. Using IV as a training signal, we learn cost-sensitive policies that intervene selectively and evaluate them in end-to-end simulated sessions in which a tutor guides a student through a problem variant of increasing difficulty under a fixed turn budget. Our contributions can be summarized as follows:

\begin{itemize}
    \vspace{-0.1cm}
    \item[$\blacktriangleright$] We identify information asymmetry as a systematic source of over-intervention in LLM tutoring.
    \vspace{-0.1cm}
    \item[$\blacktriangleright$] We propose \emph{Intervention Value} (IV), a rollout-based measure of whether a tutor action helps more than simply letting the student continue.
    \vspace{-0.1cm}
    \item[$\blacktriangleright$] We construct \textsc{DICE-Bench}, a benchmark for evaluating tutors across different problem variants.
    \vspace{-0.1cm}
    \item[$\blacktriangleright$] We introduce \textsc{DICE}, a tutoring framework that separates the decision to intervene from the choice of teaching action and the generation of the tutor response. In simulated tutoring sessions, \textsc{DICE} reduces over-intervention to near zero and guides students in fewer turns while maintaining success rates.
\end{itemize}


\section{Related Work}

\noindent\textbf{LLM-based intelligent tutoring and pedagogical control.}
Recent work has advanced LLM-based tutoring from single-turn math problem solving~\cite{wei2022chain, cobbe2021trainingverifierssolvemath, hendrycks2021measuring} to interactive systems that generate structured feedback, Socratic questions, and adaptive multi-turn guidance~\cite{10.5555/3737916.3740637,10.1145/3627673.3679881}. Prior systems improve tutoring through Socratic dialogue generation~\cite{10.5555/3737916.3740637}, conversational mathematics and coding tutoring~\cite{10.1145/3627673.3679881, kargupta-etal-2024-instruct}, stepwise reasoning verification~\cite{daheim-etal-2024-stepwise}, fine-grained pedagogical intent modeling~\cite{petukhova-kochmar-2025-intent,pal-etal-2025-beyond}, strategy-aware steering~\cite{puech-etal-2025-towards}, and turn-level verifier-guided response selection~\cite{wang-etal-2025-training}. Related prompting work further shows that Socratic-style reasoning can improve LLM problem solving~\cite{pei-etal-2025-socratic}. Beyond generation quality, recent tutoring agents increasingly incorporate explicit student-state tracking, error diagnosis, and intent-conditioned response planning to make tutor behavior more interpretable and controllable~\cite{puech-etal-2025-towards,wang-etal-2025-training,petukhova-kochmar-2025-intent}. These developments have produced stronger pedagogical responses and more structured tutoring interactions across math, coding, and self-regulated learning settings~\cite{10.1145/3627673.3679881,wang-etal-2025-training,liu2026metaclassmetacognitivecoachinglearning, tiwari2026turning}.

\noindent\textbf{Teacher-student explanation and intervention efficiency.}
A parallel line of work studies what makes a teacher's communication useful to a student, a question long studied in intelligent tutoring through feedback, hinting, and human-tutoring effectiveness~\cite{vanlehn2011relative}. Early work distinguishes \emph{simulatability} (whether explanations help a student predict the teacher's answer) from direct teaching utility (where explanations help the student learn the task itself)~\cite{chandrasekaran-etal-2018-explanations, treviso-martins-2020-explanation, hase-bansal-2020-evaluating}. Recent work extends this question to LLMs: whether stronger models can teach weaker agents through explanations under communication or budget constraints~\cite{saha2023can}, whether human explanations can teach LLMs~\cite{lampinen-etal-2022-language}, and whether teacher knowledge can be distilled into student models via fine-tuning on teacher-generated rationales~\cite{magister-etal-2023-teaching, pmlr-v202-fu23d, wang2024tpd}. These approaches improve how a teacher \emph{communicates} when communication occurs, but assume that the teacher should communicate at every step. However, they generally do not model the cost of unnecessary communication, the trade-off between intervention frequency and student performance, or when the teacher should decide \emph{not} to intervene.

Taken together, prior work has made substantial progress in improving the quality, structure, and pedagogical intent of tutor responses. Most systems still treat tutoring as response generation triggered at every student turn. \textsc{DICE} instead makes intervention \emph{necessity} an explicit learned decision, separating \emph{whether} to intervene from \emph{what} action to take and \emph{how} to realize it.

\section{New Benchmark: \textsc{DICE-Bench}}
\label{sec:data}

Existing tutoring datasets~\cite{macina-etal-2023-mathdial,10.5555/3737916.3740637} treat tutoring as a response-generation task: each example pairs a tutoring context with a teacher utterance, and a model is trained to imitate that utterance. Two things are missing from this setup. First, problems appear in isolation, so there is no way to measure whether tutoring actually helps a student progress \emph{across} problems. Particularly, whether mastering an easy problem prepares the student to solve a harder one of the same skill, or whether the tutor's guidance generalizes beyond a single instance. Second, every annotated turn assumes the teacher should speak; the data contains no signal for when a tutor should simply acknowledge a correct answer and let the student continue. As a result, models trained on these datasets learn \emph{what} to say at every turn but never learn \emph{when} not to say anything. \textsc{DICE-Bench} addresses both gaps. It pairs each source problem with skill-preserving easier and harder variants, enabling \emph{session-level} evaluation of whether tutoring helps a student transfer from simpler to harder instances of the same concept. In addition, it labels a subset of tutoring states from the SocraTeach \cite{10.5555/3737916.3740637} with an explicit pedagogical action, giving models a direct learning signal for when to intervene.

We describe the source data (\S\ref{sec:source}), difficulty augmentation (\S\ref{sec:augmentation}), policy annotation (\S\ref{sec:annotation}), and benchmark summary (\S\ref{sec:dataset_stats}).

\subsection{Source Data}
\label{sec:source}

We build on SocraTeach~\cite{10.5555/3737916.3740637}, a synthetic tutoring corpus constructed from GSM8K~\cite{cobbe2021trainingverifierssolvemath} and MAWPS~\cite{koncel-kedziorski-etal-2016-mawps} math word problems. Each problem \(q\) in SocraTeach comes with a gold answer \(q^\ast\), a reasoning analysis, and a sequence of \(M\) subgoal steps \(\{g_1,\ldots,g_M\}\) that define the intended tutoring trajectory. Particularly, two properties make SocraTeach a useful source for intervention-calibrated tutoring research. First, the subgoal decompositions expose natural \emph{decision points} at which a tutor must decide how to respond. Second, its multi-dialogue structure, where each problem is paired with conversations across different simulated student profiles, provides diverse tutoring states for the same underlying problem. We therefore use SocraTeach as the source pool for constructing \textsc{DICE-Bench}.

\subsection{Skill-Preserving Difficulty Augmentation}
\label{sec:augmentation}

A central design goal of \textsc{DICE-Bench} is to enable the evaluation of tutoring policies across problems of varying difficulty that share the same underlying skill. For each source problem $q$, we generate two \emph{sibling variants}: an easier problem $q^{-}$ and a harder problem $q^{+}$, forming a \emph{problem triplet} $(q^{-}, q, q^{+})$. This triplet structure serves two purposes: it enables session-level evaluation where a tutor guides a student across incrementally harder problems under a fixed interaction budget, and it provides a controlled testbed for measuring whether intervention utility varies with problem difficulty.

\begin{figure}[htbp]
    \centering
\includegraphics[width=0.75\textwidth]{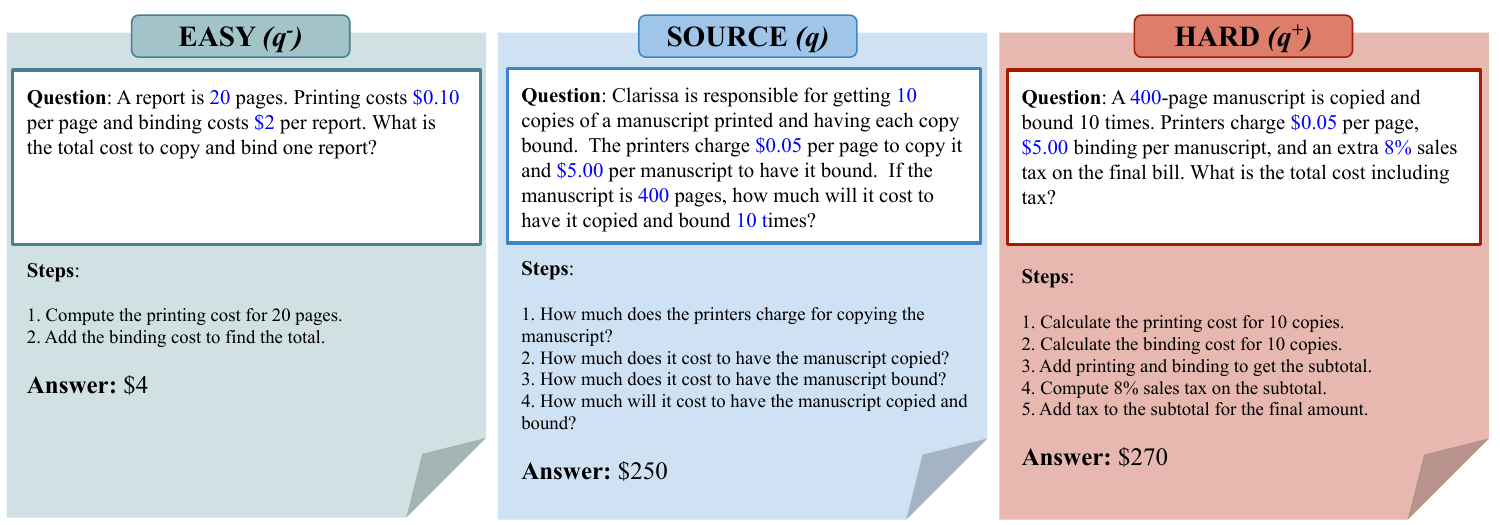}
   \caption{\footnotesize A qualitative example of \textsc{DICE-Bench} difficulty augmentation. Easy ($q^{-}$), Source ($q$), and Hard ($q^{+}$) variants preserve the same underlying skill while increasing reasoning and operational complexity.}    
    \label{fig:qualitative_difficulty_example}
\end{figure} 

\noindent\textbf{Principled difficulty variation.}
We ground difficulty variation in cognitive and computational accounts of problem complexity. Cognitive Load Theory (\abv{clt}) views difficulty as a combination of cognitive demands rather than a single scalar~\cite{sweller1994cognitive}; following complexity metrics for mathematical reasoning~\cite{fu2023complexitybased}, we vary problems along three axes: \textit{reasoning depth}, the number of sequential inference hops in the solution chain or \abv{dag}~\cite{wei2022chain}; \textit{operational complexity}, the arithmetic structure induced by adding, removing, or composing operation nodes~\cite{yin-neubig-2017-syntactic}; and \textit{linguistic load}, the parsing burden approximated by syntactic dependency depth~\cite{de2006generating}. Easy variants reduce one or more demands, while hard variants increase them, but all variants must preserve the same underlying mathematical skill.

\begin{wraptable}{r}{0.40\textwidth}
\vspace{-2.0em}
\centering
\scriptsize
\caption{\scriptsize Structural validation of generated difficulty variants in \textsc{DICE-Bench}. Means are computed over 10,273 matched triplets \((q^{-},q,q^{+})\); pairwise \(q^{-}\!\leq q\) and \(q\!\leq q^{+}\) differences are significant under paired Wilcoxon signed-rank tests.}
\label{tab:difficulty_validation}
\resizebox{0.39\textwidth}{!}{%
\begin{tabular}{lccccc}
\toprule
Metric & Easy & Source & Hard & E $\leq$ S & S $\leq$ H \\
\midrule
Steps & 2.23 & 3.09 & 3.63 & 95.9\% & 86.8\% \\
Operators & 1.96 & 3.27 & 4.49 & 88.9\% & 79.9\% \\
Equations & 2.33 & 2.93 & 4.80 & 87.5\% & 92.6\% \\
Numeric tokens & 7.00 & 10.71 & 14.19 & 94.3\% & 83.2\% \\
Question tokens & 29.38 & 47.18 & 47.76 & 97.4\% & 60.8\% \\
\bottomrule
\end{tabular}%
}
\vspace{-1.0em}
\end{wraptable}

\noindent\textbf{Generation procedure.}
For each $q$, we prompt a frontier language model to generate ($q^{-}$, $q^{+}$) that preserves the same skill while varying structural difficulty. Fig.~\ref{fig:qualitative_difficulty_example} shows a qualitative example of the triplet. As shown in Table~\ref{tab:difficulty_validation}, the generated variants exhibit the intended easy-source-hard structure across solution-level metrics, while question length shows weaker source-to-hard separation; details are provided in Appendix~\ref{app:dice_bench_aug}.

\subsection{Turn-Level Policy Annotation}
\label{sec:annotation}

\begin{wraptable}{l}{0.40\textwidth}
\vspace{-1.0em}
\centering
\scriptsize
\caption{\footnotesize Composition of \textsc{DICE-Bench}. Splits are by source problem ID; action distribution is reported over all variants.}
\label{tab:data_stats}
\resizebox{0.39\textwidth}{!}{%
\begin{tabular}{lrrr}
\toprule
& $\mathcal{D}_{\mathrm{src}}$ & $\mathcal{D}_{\mathrm{easy}}$ & $\mathcal{D}_{\mathrm{hard}}$ \\
\midrule
Provenance & dialogue & step & step \\
Decision points & 7{,}625 & 4{,}728 & 8{,}124 \\
\midrule
\multicolumn{4}{l}{\textit{Action distribution, all variants, $N=20{,}477$}} \\
\midrule
$a_0$ \;(\texttt{accept}) & \multicolumn{3}{c}{8{,}037 \;(39.2\%)} \\
$a_1$ \;(\texttt{arith.\ repair}) & \multicolumn{3}{c}{4{,}019 \;(19.6\%)} \\
$a_3$ \;(\texttt{handle\_q}) & \multicolumn{3}{c}{3{,}888 \;(19.0\%)} \\
$a_5$ \;(\texttt{refocus}) & \multicolumn{3}{c}{2{,}177 \;(10.6\%)} \\
$a_2$ \;(\texttt{concept.\ hint}) & \multicolumn{3}{c}{1{,}906 \;(9.3\%)} \\
$a_4$ \;(\texttt{justify}) & \multicolumn{3}{c}{450 \;(2.2\%)} \\
\bottomrule
\end{tabular}%
}
\vspace{-1.0em}
\end{wraptable}

\textsc{DICE-Bench} turns free-form tutoring data into explicit supervision for tutor action prediction, with dataset composition and action-label statistics summarized in Table~\ref{tab:data_stats}. For each source problem \(q\) and its skill-preserving variants \((q^{-},q^{+})\), we create decision states \(x\) and ask a frontier model to choose the appropriate tutor action \(a\in\mathcal{A}\), where \(\mathcal{A}=\{a_0,\ldots,a_5\}\). We obtain these states from two sources. First, for dialogue-derived states \(\mathcal{D}_{\mathrm{dial}}\), we take each turn in an existing SocraTeach conversation, provide the problem, dialogue history, current subgoal, tutor prompt, and student response, and annotate what the tutor should do next. Second, for step-variant states \(\mathcal{D}_{\mathrm{step}}\), where no original dialogue exists for \(q^{-}\) and \(q^{+}\), we build local subgoal-level states from the synthetic variant solution steps and annotate the next tutor action for each state. Each final instance is \((x_i,a_i,v_i,r_i)\), where \(v_i\in\{q^{-},q,q^{+}\}\) identifies the variant and \(r_i\in\{\mathcal{D}_{\mathrm{dial}},\mathcal{D}_{\mathrm{step}}\}\) identifies the annotation source. Additional annotation details are provided in Appendix~\ref{app:policy_annotation_details}.

\begin{wrapfigure}{r}{0.40\textwidth}
\vspace{-3.0em}
\centering
\includegraphics[width=0.35\textwidth]{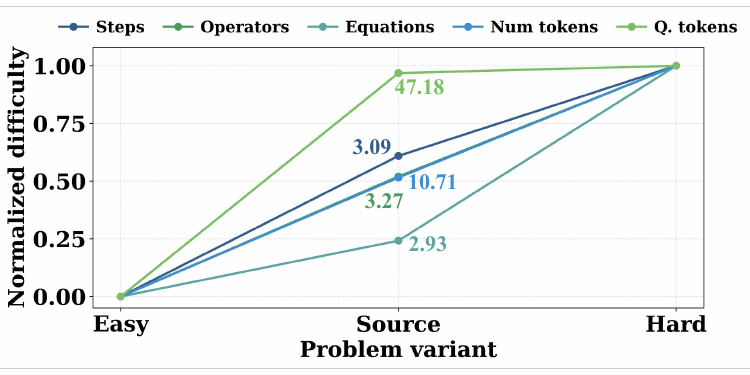}
\vspace{-0.8em}
\caption{\footnotesize Normalized difficulty trajectory across variants. Values are normalized per metric to highlight relative progression across the triplet.}
\label{fig:normalized_difficulty_trajectory}
\vspace{-1.0em}
\end{wrapfigure}

\subsection{\textsc{DICE-Bench} Summary}
\label{sec:dataset_stats}

Overall, \textsc{DICE-Bench} contains 10{,}273 matched problem triplets \((q^{-},q,q^{+})\) and 20{,}477 cleaned turn-level policy instances for action prediction. Tables~\ref{tab:difficulty_validation} and~\ref{tab:data_stats} summarize the benchmark composition, while Fig.~\ref{fig:normalized_difficulty_trajectory} visualizes the normalized difficulty trajectory across variants. This structure enables controlled evaluation of both session-level learning progress and turn-level intervention calibration.

\section{Method}
\label{sec:method}

\noindent\textbf{Preliminaries and Motivation.}
LLM tutors are typically trained as \emph{response generators}: given a tutoring context, produce the next teacher utterance. Systems such as SocraticLM show that LLMs can imitate multi-turn Socratic teaching, but this formulation assumes that a response is always needed. Effective tutoring, however, often begins with a prior decision: should the tutor step in, or should the student continue reasoning? This distinction is obscured by standard tutoring supervision, where teacher responses are often annotated or generated with access to the gold solution and intended reasoning path. Such \emph{privileged context} makes intervention appear naturally justified, since the annotator can immediately see whether the student is correct, partially correct, or drifting. At deployment time, however, the tutor must act from the observable dialogue state alone. Models trained to imitate answer-aware teacher responses may therefore inherit an intervention bias, producing guidance even when acknowledgment or non-intervention would be sufficient.

We recast tutoring as \emph{intervention-calibrated policy optimization}. Given a tutoring state \(x_t\), the tutor first selects a discrete pedagogical action \(a_t\in\mathcal{A}\), where \(a_0\) denotes non-intervention and \(\mathcal{A}^{+}=\mathcal{A}\setminus\{a_0\}\) denotes active interventions. Only when \(a_t\in\mathcal{A}^{+}\) does the system generate action-conditioned guidance. This separates three decisions conflated in standard response generation: (1) \emph{whether} to intervene, (2) \emph{what} pedagogical action to take, and (3) \emph{how} to realize that action in language. \textsc{DICE} formalizes tutoring as action selection (\S\ref{sec:formulation}), defines Intervention Value as counterfactual intervention utility (\S\ref{sec:iv}), audits supervision bias (\S\ref{sec:audit}), and optimizes an IV-calibrated policy for action-conditioned guidance generation (\S\ref{sec:iv_policy}). Architecture visualization is provided in Appendix~\ref{app:dicebench_architecture}.

\subsection{Tutoring as Action Selection}
\label{sec:formulation}

At turn \(t\), the tutor observes a state $x_t=(q,h_t,g_t,u_t,s_t),$
where \(q\) is the problem, \(h_t\) the dialogue history, \(g_t\) the current subgoal, \(u_t\) the tutor prompt, and \(s_t\) the student response. A tutoring policy $\pi_\theta:\mathcal{X}\rightarrow \Delta(\mathcal{A})$
selects a pedagogical action \(a_t\in\mathcal{A}\), with \(a_0\) denoting non-intervention and \(\mathcal{A}^{+}=\mathcal{A}\setminus\{a_0\}\) denoting active interventions. This makes the intervention decision explicit: the tutor must first decide whether to accept or take action before generating any teacher utterance.

\noindent\textbf{Supervised baseline.}
Given annotated data \(\mathcal{D}=\{(x_i,a_i)\}_{i=1}^{N}\), the standard supervised objective is
$
\mathcal{L}_{\textsc{sft}}
=
-\frac{1}{N}\sum_{i=1}^{N}\log\pi_\theta(a_i\mid x_i).
$
This treats every annotated action as equally trustworthy. However, when labels are produced with access to the gold solution, a pedagogically plausible intervention may still be unnecessary. This raises the central question: \emph{does each prescribed intervention improve student progress more than non-intervention?} We next define Intervention Value to answer this counterfactually.

\subsection{Intervention Value}
\label{sec:iv}

We introduce \emph{Intervention Value} (IV), a counterfactual measure of whether a tutor action improves student progress relative to non-intervention.

\begin{definition}[Intervention Value]\label{def:iv}
For a tutoring state $x$ and action $a\in\mathcal{A}$, the Intervention Value is
\begin{equation*}\label{eq:iv}
    \mathrm{IV}(x,a) \;=\; \mu(x,a) - \mu(x,a_0),
\end{equation*}
where $\mu(x,a) = \mathbb{E}\bigl[\,r\bigl(x,\,a,\,y\bigr)\bigr]$ is the expected student progress after the tutor takes action $a$, and $y\sim p_{\mathrm{student}}(\cdot\mid x, T(a,x))$ is the student's next response given the action-conditioned teacher utterance~$T(a,x)$.
\end{definition}

A positive IV indicates that intervention improves the student's trajectory; $\mathrm{IV}\approx 0$ indicates that the intervention provides no marginal benefit over letting the student continue; negative IV indicates that the intervention is actively worse than non-intervention. By construction, $\mathrm{IV}(x,a)\in[-1,1]$.

\noindent\textbf{Estimation via controlled rollouts.}
Since true student progress is not directly observable at each turn, we estimate \(\mu(x,a)\) through \emph{simulated rollouts}. For each state--action pair \((x,a)\), we (i)~render a controlled teacher utterance \(u_a=T(a,x)\) from an action-specific template, (ii)~sample \(K\) independent student continuations \(y^{(k)}\sim p_{\mathrm{student}}(\cdot\mid x,u_a)\) from a fixed student simulator, and (iii)~evaluate each continuation with a frontier judge model \(\mathcal{J}\). The judge assigns a progress label
\[
\ell^{(k)}\in\mathcal{L}_{\mathrm{prog}}
=
\{\ell_{\mathrm{help}},\ell_{\mathrm{part}},\ell_{\mathrm{none}}\},
\]
where \(\ell_{\mathrm{help}}\), \(\ell_{\mathrm{part}}\), and \(\ell_{\mathrm{none}}\) denote helpful progress, partial progress, and no progress, respectively.
which is mapped to a scalar progress score by
\begin{equation*}
\label{eq:progress_score}
r(\ell)=
\begin{cases}
1.0 & \text{if } \ell=\ell_{\mathrm{help}},\\
0.5 & \text{if } \ell=\ell_{\mathrm{part}},\\
0.0 & \text{if } \ell=\ell_{\mathrm{none}}.
\end{cases}
\end{equation*}
The estimated action quality and IV are then
\begin{equation*}\label{eq:iv_est}
    \hat{\mu}(x,a)
    =
    \frac{1}{K}\sum_{k=1}^{K} r\!\left(\ell^{(k)}\right),
    \qquad
    \widehat{\mathrm{IV}}(x,a)
    =
    \hat{\mu}(x,a)-\hat{\mu}(x,a_0).
\end{equation*}

Because both \(a\) and \(a_0\) are evaluated with the same simulator, judge, rollout budget, and rubric, \(\widehat{\mathrm{IV}}(x,a)\) asks not whether the judge is perfectly calibrated in absolute terms, but \textit{whether the intervention improves the student's next step more than letting the student continue}.

\subsection{Auditing Supervision with IV}
\label{sec:audit}

IV allows us to audit whether annotated tutor interventions provide marginal value over non-intervention. For each annotated intervention \((x_i,a_i)\) with \(a_i\in\mathcal{A}^{+}\), we assign
\begin{equation*}\label{eq:audit}
    Q_{\mathrm{IV}}(x_i,a_i)=
    \begin{cases}
        \textsc{helpful}  & \text{if } \widehat{\mathrm{IV}}(x_i,a_i) > \epsilon,\\
        \textsc{neutral}  & \text{if } |\widehat{\mathrm{IV}}(x_i,a_i)| \leq \epsilon,\\
        \textsc{harmful}  & \text{if } \widehat{\mathrm{IV}}(x_i,a_i) < -\epsilon,
    \end{cases}
\end{equation*}
where \(\epsilon\) defines a small margin around zero. We then aggregate \(Q_{\mathrm{IV}}\) by action type and annotation source. Action-level aggregation reveals which pedagogical moves are reliably useful versus over-prescribed, while source-level aggregation tests the information asymmetry hypothesis: interventions derived from answer-aware dialogue trajectories should have lower marginal value than step-level interventions constructed around local subgoals. We test this prediction empirically in \S\ref{sec:experiments}. Additional details on action-level and source-level IV aggregation are provided in Appendix~\ref{app:iv_audit_details}.

\subsection{IV-Calibrated Policy Optimization}
\label{sec:iv_policy}

Standard \abv{sft} treats every annotated action as equally reliable. We instead train the policy to trust intervention labels in proportion to their estimated marginal utility. For each training example \((x_i,a_i)\), we define
\begin{equation}\label{eq:weights}
    w_i =
    \begin{cases}
        1, & \text{if } a_i=a_0,\\[2pt]
        \max\!\bigl(\widehat{\mathrm{IV}}(x_i,a_i),0\bigr), & \text{if } a_i\in\mathcal{A}^{+}.
    \end{cases}
\end{equation}
Thus, non-intervention labels remain fully supervised, while active interventions are reinforced only when they improve progress over \(a_0\). Because IV estimates are obtained from simulated rollouts, we regularize the learned policy toward the original supervised policy \(\pi_{\textsc{sft}}\):
\begin{equation}\label{eq:dice}
    \mathcal{L}_{\textsc{dice}}
    =
    \underbrace{
        -\frac{1}{N}\sum_{i=1}^{N} w_i \log\pi_\theta(a_i\mid x_i)
    }_{\text{IV-weighted supervision}}
    \;+\;
    \beta\;\underbrace{
        \frac{1}{N}\sum_{i=1}^{N}
        D_{\mathrm{KL}}\!\bigl(\pi_\theta(\cdot\mid x_i)\;\big\|\;\pi_{\textsc{sft}}(\cdot\mid x_i)\bigr)
    }_{\text{proximity to supervised policy}}
\end{equation}
The first term corrects supervision bias by downweighting low-value interventions; the \abv{kl} term prevents over-correction from noisy IV estimates. Additional details provided in Appendix~\ref{app:iv_policy}.

\subsection{Cost-Sensitive Action-Conditioned Generation}
\label{sec:generation}

At inference time, \textsc{DICE} separates \emph{decision} from \emph{realization}: the policy first selects a pedagogical action, and the generator then realizes that action as a teacher utterance. Given state \(x_t\), we select
\begin{equation}
\label{eq:dice_cost}
    \hat{a}_t
    =
    \arg\max_{a\in\mathcal{A}}
    \left[
        \log \pi_\theta(a\mid x_t)
        -
        \lambda\,\mathbf{1}[a\neq a_0]
    \right],
\end{equation}
where \(a_0\) denotes non-intervention and \(\lambda\geq0\) is a validation-tuned intervention cost. Thus, active interventions must overcome a margin before the tutor interrupts the student. The selected action then conditions teacher generation. If \(\hat{a}_t=a_0\), the tutor emits a brief acknowledgment \(u_t=T_0(x_t)\) and advances. Otherwise,
$
    u_t \sim p_\phi(\cdot\mid x_t,\hat{a}_t),
    \;\hat{a}_t\in\mathcal{A}^{+}.
$
This factorization makes the intervention decision inspectable before free-form text is produced, allowing us to measure over-intervention, false acceptance, and action-selection errors directly at the policy level. We evaluate three inference rules: \textsc{DICE-MAP} selects the highest-probability action, \textsc{DICE-Cost} applies the intervention penalty in Eq.~\ref{eq:dice_cost}, and \textsc{DICE-Threshold} first thresholds the total probability of intervention before choosing an active action. Details are provided in Appendix~\ref{app:generation_details}; theoretical support for the cost-sensitive rule, IV-weighted KL objective, and provenance-based IV diagnostic is provided in Appendix~\ref{appendix:theory}.

\section{Experiments}
\label{sec:experiments}

We evaluate \textsc{DICE} on three settings: turn-level policy prediction, single-problem tutoring, and session-level tutoring over problem triplets \((q^{-},q,q^{+})\). For policy prediction, we compare pretrained, SFT, and IV-calibrated policies across Llama-3.1-8B-Instruct and Qwen-2.5-Instruct backbones. For tutoring, we first measure the student's unaided capability using student-only baselines: zero-shot, chain-of-thought (\abv{cot})~\cite{wei2022chain}, CoT with gold subgoal steps, and unassisted \(K\) attempts. We then compare against interactive tutoring baselines: Socratic prompting, MathDial fine-tuned tutoring~\cite{macina-etal-2023-mathdial}, SocraticLM~\cite{10.5555/3737916.3740637}, and StratL-inspired steering~\cite{puech-etal-2025-towards}. \textsc{DICE} is evaluated with three inference rules: \textsc{DICE-MAP} (Eq.~\ref{eq:dice_map}), \textsc{DICE-Cost} with validation-tuned \(\lambda\) (Eq.~\ref{eq:dice_cost}), and \textsc{DICE-Threshold} with validation-tuned \(\tau\) (Eq.~\ref{eq:dice_threshold}). Unless otherwise stated, all tutoring methods use Qwen-2.5-7B-Instruct as the teacher and Qwen-2.5-1.5B-Instruct as the student simulator. Full implementation details, decoding settings, and hyperparameters are provided in Appendix~\ref{app:experimental_details}.

\noindent\textbf{Evaluation.}
For policy prediction, we report action accuracy, action macro-F1, and intervention errors after collapsing actions into non-intervention \(a_0\) versus active intervention \(\mathcal{A}^{+}\). For tutoring, we report success rate, variant-level accuracy, average turns, over-intervention, false acceptance (\abv{facc}), and unsafe final-answer acceptance. Over-intervention means the tutor interrupts even when the student is already correct, while \abv{facc} means the tutor accepts an incorrect student response. Unsafe final-answer acceptance is the stricter case where the tutor accepts an incorrect response that looks like a final answer. Formal metric definitions are provided in Appendix~\ref{app:evaluation_metrics}.

\section{Results and Discussion}
\label{sec:results}

\noindent\textbf{Policy model comparison.}

\begin{wrapfigure}{l}{0.42\columnwidth}
\vspace{-0.8em}
\centering
\includegraphics[width=\linewidth]{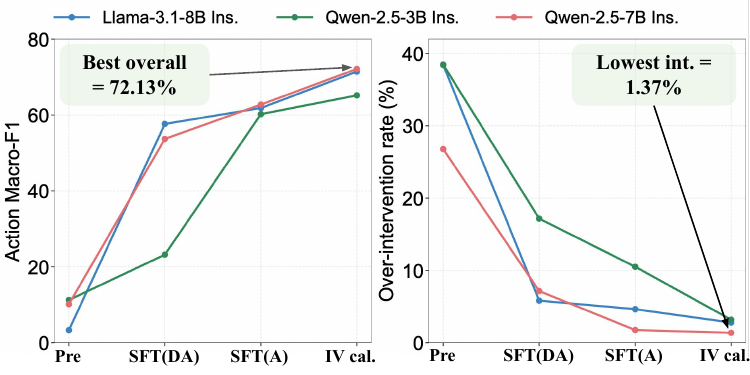}
\vspace{-0.8em}
\caption{\footnotesize Policy-model comparison. \textbf{Left: action macro-F1}. \textbf{Right: over-intervention rate}.}
\label{fig:policy_model_summary}
\vspace{-1.0em}
\end{wrapfigure}

We evaluate the policy model across two backbone families, Llama-3.1 and Qwen-2.5, and additionally include a smaller Qwen-2.5-3B model to test whether compact models can reliably predict pedagogical actions. As shown in Fig.~\ref{fig:policy_model_summary}, performance improves substantially from pretrained models to supervised policies, and improves further with IV-calibrated training. Among all variants, Qwen-2.5-7B-Instruct achieves the best overall balance of action prediction quality and intervention calibration, and is therefore used as the default policy backbone in the end-to-end experiments. Detailed per-model results are provided in Appendix~\ref{app:policy_results}.

\noindent\textbf{Understanding Student Capability}: Before evaluating tutors, we first measure the student model's unaided problem-solving ability on \(6k\) problems, with \(2k\) examples each from \(q^{-}\), \(q\), and \(q^{+}\), to ensure that tutoring gains are measured relative to a known student capability profile. We use Qwen-2.5-1.5B-Instruct as the fixed student simulator because it provides a balanced capability profile, being neither too weak nor already saturated; details are provided in Appendix~\ref{app:student_capability}.

\noindent\textbf{Single-problem tutoring.}
Table~\ref{tab:single_problem_results_compact} and Fig.~\ref{fig:single_problem_tutoring_summary} summarize the single-problem Socratic tutoring setting. Interactive tutoring substantially improves over unaided student baselines. However, existing tutors exhibit different failure modes: MathDial fine-tuning has the lowest \abv{facc} but over-intervenes heavily, while SocraticLM is turn-efficient but has a high \abv{facc}. \textsc{DICE} provides a stronger balance across metrics. \textsc{DICE-MAP} achieves the best overall accuracy (\(0.797\)), while \textsc{DICE-Threshold} achieves the best hard-variant accuracy and reduces over-intervention to near zero (\(0.002\)). These results suggest that explicit action selection improves tutoring effectiveness, and that calibrated intervention control is crucial for avoiding unnecessary teacher turns. We tune \textsc{DICE} inference hyperparameters on a 150-problem validation sweep; details are provided in Appendix~\ref{app:dice_hparam_sweep}.

\begin{figure}[htbp]
\centering
\begin{minipage}{0.60\columnwidth}
    \centering
    \scriptsize
    \captionof{table}{\footnotesize Single-problem results. \(q^{-}\), \(q\), and \(q^{+}\) denote easy, source, and hard variants. Best values are bolded and second-best values are underlined.}
    \label{tab:single_problem_results_compact}
    \renewcommand{\arraystretch}{1.03}
    \setlength{\tabcolsep}{1.15pt}
    \begin{tabular}{lcccccccc}
    \toprule
    \rowcolor{gray!25}
    \textbf{Method} &
    \textbf{\(K\)} &
    \textbf{Acc.} \(\uparrow\) &
    \textbf{\(q^{-}\)} \(\uparrow\) &
    \textbf{\(q\)} \(\uparrow\) &
    \textbf{\(q^{+}\)} \(\uparrow\) &
    \textbf{Turn} \(\downarrow\) &
    \textbf{\abv{oint}} \(\downarrow\) &
    \textbf{\abv{facc}} \(\downarrow\) \\
    \midrule

    \multicolumn{9}{l}{\cellcolor{gray!12}\textbf{Student-only Q\&A}} \\
    Zero-shot & 1 & 0.231 & 0.436 & 0.184 & 0.072 & -- & -- & -- \\
    CoT & 1 & 0.240 & 0.436 & 0.196 & 0.088 & -- & -- & -- \\
    CoT+Steps & 1 & 0.256 & 0.488 & 0.164 & 0.116 & -- & -- & -- \\
    \(K\)-Att. & 8 & 0.274 & 0.504 & 0.224 & 0.096 & 1.402 & -- & -- \\

    \midrule
    \multicolumn{9}{l}{\cellcolor{gray!12}\textbf{Socratic tutoring}} \\
    StratL & 8 & 0.703 & 0.804 & 0.784 & 0.512 & 1.559 & 0.381 & 0.394 \\
    Ft. MD & 8 & 0.550 & 0.668 & 0.616 & 0.368 & 2.653 & 0.673 & \textbf{0.133} \\
    SP & 8 & 0.762 & 0.920 & 0.804 & 0.564 & 2.875 & 0.486 & 0.391 \\
    SLM & 8 & 0.700 & 0.816 & 0.760 & 0.524 & \textbf{1.165} & 0.190 & 0.546 \\

    \rowcolor{oursblue}
    \textsc{DICE-MAP} & 8 & \textbf{0.797} & \underline{0.916} & \textbf{0.862} & \underline{0.599} & 1.845 & 0.020 & 0.369 \\
    \rowcolor{oursblue}
    \textsc{DICE-COST(\(\lambda=0.25\))} & 8 & 0.773 & 0.912 & \underline{0.852} & 0.539 & 1.773 & \underline{0.008} & 0.386 \\
    \rowcolor{oursblue}
    \textsc{DICE-THR.(\(\tau=0.8\))} & 8 & \underline{0.787} & \textbf{0.923} & 0.812 & \textbf{0.614} & \underline{1.624} & \textbf{0.002} & \underline{0.335} \\

    \bottomrule
    \end{tabular}%
\end{minipage}
\hfill
\begin{minipage}{0.38\columnwidth}
    \centering
    \includegraphics[width=\linewidth]{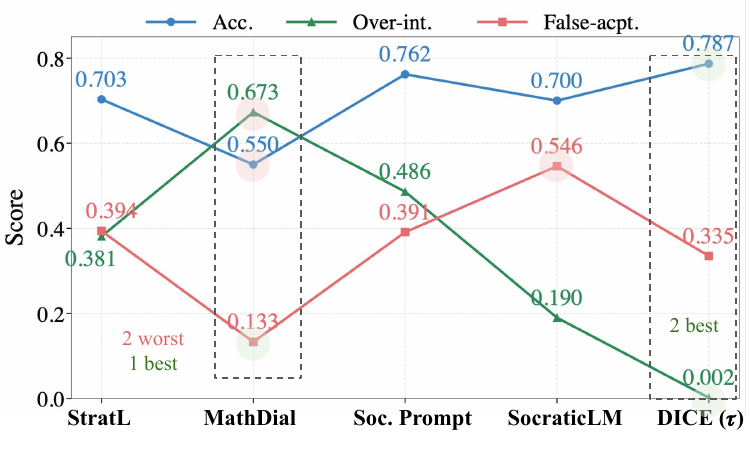}
    \vspace{-0.8em}
    \caption{\footnotesize Single-problem tutoring trade-off. \textsc{DICE-Threshold} maintains high accuracy while sharply reducing over-intervention.}
    \label{fig:single_problem_tutoring_summary}
\end{minipage}

\vspace{-0.8em}
\end{figure}

\noindent\textbf{Session-level tutoring.}
Table~\ref{tab:session_level_results} evaluates the central setting of \textsc{DICE}: guiding the same student through a problem triplet \((q^{-}, q, q^{+})\) under a shared \(20\)-turn budget. This setting is more demanding than single-problem tutoring because the tutor must not only solve an isolated problem, but also allocate interventions efficiently across increasing difficulty. Existing Socratic tutoring baselines struggle in this setting. MathDial fine-tuned tutor is conservative and achieves low unsafe final-answer acceptance (\(0.008\)), but solves all three variants in only \(0.080\) of sessions and over-intervenes in \(0.904\) of sessions. Socratic prompting and StratL improve success, but still use most of the turn budget and exhibit high over-intervention. SocraticLM is stronger than these baselines in turn efficiency, but it has the highest unsafe final-answer acceptance (\(0.400\)).

\textsc{DICE} provides a substantially better session-level trade-off. \textsc{DICE-Cost} achieves the best All-3 success rate (\(0.564\)), solves the most variants per session (\(2.324\) out of \(3\)), and uses the fewest turns on average (\(11.940\)). \textsc{DICE-Threshold} achieves the best easy-variant success (\(0.948\)) and the lowest over-intervention rate (\(0.024\)). All three \textsc{DICE} variants reduce unsafe final-answer acceptance to \(0\), showing that the acceptance verifier and final-answer guard prevent the most harmful accept decisions. Overall, the session results support the main hypothesis: decoupling response generation from intervention decisions enables tutors to preserve or improve student success while using fewer turns and avoiding unnecessary intervention. Detailed discussion in Appendix \ref{app:verifier_guard_discussion}.

\begin{wraptable}{r}{0.40\textwidth}
\vspace{-0.8em}
\centering
\tiny
\caption{\footnotesize Session-level ablation of \textsc{DICE}. The full system uses cost-sensitive action selection, an acceptance verifier, and a final-answer guard.}
\label{tab:session_ablation}
\renewcommand{\arraystretch}{1.08}
\setlength{\tabcolsep}{2.2pt}
\begin{tabular}{lccccc}
\toprule
\rowcolor{gray!25}
\textbf{Method} &
\textbf{All3} \(\uparrow\) &
\textbf{Vars.} \(\uparrow\) &
\textbf{Turn} \(\downarrow\) &
\textbf{OInt} \(\downarrow\) &
\textbf{UAcc} \(\downarrow\) \\
\midrule
\rowcolor{oursblue}
Full
& \textbf{0.564} & \textbf{2.324} & 11.940 & 0.028 & \textbf{0.000} \\
w/o Cost
& 0.532 & 2.284 & 13.048 & 0.036 & \textbf{0.000} \\
w/o Guard
& 0.552 & 2.300 & 11.720 & 0.026 & 0.032 \\
w/o Verifier
& 0.520 & 2.240 & \textbf{10.850} & \textbf{0.020} & 0.291 \\
w/o All
& 0.216 & 1.496 & 18.048 & 0.880 & 0.076 \\
\bottomrule
\end{tabular}
\vspace{-1.0em}
\end{wraptable}

\begin{figure}[htbp]
    \centering
    \includegraphics[width=0.98\textwidth]{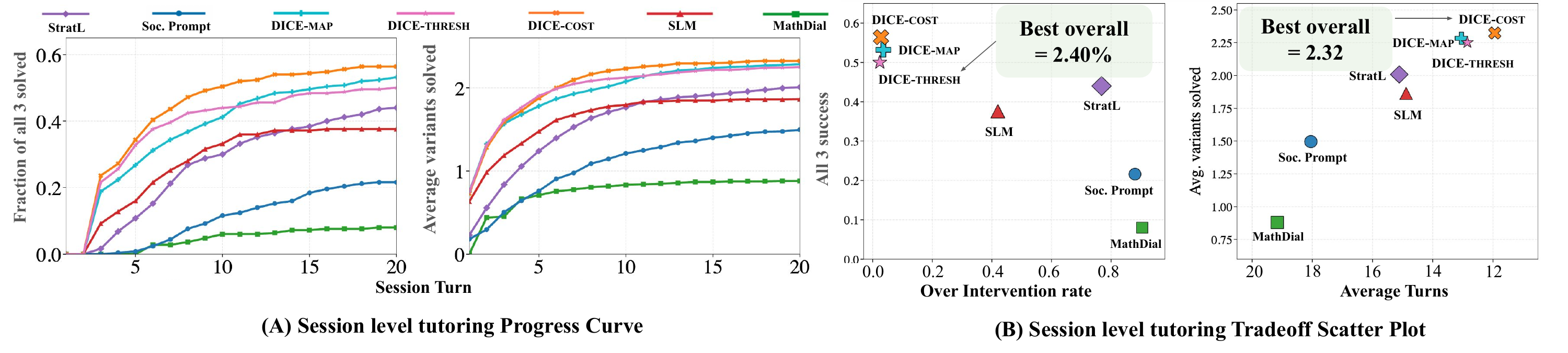}
    \caption{\footnotesize Session-level tutoring dynamics on \textsc{DICE-Bench}. Left: Progress curves showing the fraction of sessions where all three variants are solved and the average number of variants solved as the shared turn budget is used. Right: Trade-off plots comparing all-3 success against over-intervention and average variants solved against average turns. \textsc{DICE-Cost} achieves the strongest session-level trade-off, solving more variants in fewer turns while keeping over-intervention near zero.}
\label{fig:session_level_dynamics}
\end{figure}

Fig.~\ref{fig:session_level_dynamics} shows that \textsc{DICE} uses the shared turn budget more effectively than response-only tutors. \textsc{DICE-Cost} achieves the best overall session trade-off, solving the most variants in fewer turns while keeping over-intervention near zero.

\noindent\textbf{Ablation.}
Table~\ref{tab:session_ablation} ablates each \textsc{DICE} component independently. The learned action policy is the dominant contributor: removing it collapses session success from \(0.564\) to \(0.216\) and increases over-intervention from \(0.028\) to \(0.880\). The acceptance verifier and final-answer guard provide complementary safety: removing the verifier increases unsafe acceptance from \(0.000\) to \(0.291\). Cost-sensitive selection improves turn efficiency, reducing average turns from \(13.048\) to \(11.940\) while preserving safety.


\begin{table*}[t]
\centering
\footnotesize
\caption{\footnotesize \textbf{Session-level tutoring results on \textsc{DICE-Bench}}. Each session contains the triplet \((q^{-},q,q^{+})\) under a shared 20-turn budget. All-3 reports full triplet success, Vars. is the average number of variants solved, and UnsafeAcc. is unsafe final-answer acceptance. Higher is better for success metrics and Vars.; lower is better for Turns, OverInt., and UnsafeAcc. Best values are bolded and second-best values are underlined.}
\label{tab:session_level_results}
\renewcommand{\arraystretch}{1.12}
\setlength{\tabcolsep}{3.8pt}
\begin{tabular}{llcccccccc}
\toprule
\rowcolor{gray!25}
\textbf{Mode} &
\textbf{Method} &
\textbf{All-3} \(\uparrow\) &
\textbf{\(q^{-}\)} \(\uparrow\) &
\textbf{\(q\)} \(\uparrow\) &
\textbf{\(q^{+}\)} \(\uparrow\) &
\textbf{Vars.} \(\uparrow\) &
\textbf{Turns} \(\downarrow\) &
\textbf{OverInt.} \(\downarrow\) &
\textbf{UnsafeAcc.} \(\downarrow\) \\
\midrule

\multirow{4}{*}{\cellcolor{white}\textbf{Soc. Tut.}}
& StratL
& 0.440 & 0.868 & 0.700 & 0.440 & 2.008 & 15.112 & 0.768 & 0.272 \\
& Ft. MathDial
& 0.080 & 0.516 & 0.284 & 0.080 & 0.880 & 19.164 & 0.904 & \underline{0.008} \\
& Soc. Prompting
& 0.216 & 0.772 & 0.508 & 0.216 & 1.496 & 18.048 & 0.880 & 0.076 \\
& SocraticLM
& 0.376 & 0.844 & 0.644 & 0.376 & 1.864 & 14.884 & 0.420 & 0.400 \\

\midrule

\rowcolor{oursblue}
\textbf{\textsc{DICE}}
& MAP
& \underline{0.532} & \underline{0.928} & \underline{0.824} & \underline{0.532} & 2.284 & 13.048 & 0.036 & \textbf{0.000} \\
\rowcolor{oursblue}
& Cost \((\lambda=0.25)\)
& \textbf{0.564} & \underline{0.928} & \textbf{0.832} & \textbf{0.564} & \textbf{2.324} & \textbf{11.940} & \underline{0.028} & \textbf{0.000} \\
\rowcolor{oursblue}
& Thres. \((\tau=0.8)\)
& 0.500 & \textbf{0.948} & 0.804 & 0.500 & \underline{2.252} & \underline{12.852} & \textbf{0.024} & \textbf{0.000} \\

\bottomrule
\end{tabular}%
\end{table*}

\section{Conclusion}
\label{sec:conclusion}

We introduced \textsc{DICE}, a decision-aware framework for LLM tutoring that separates the decision of whether to intervene from the generation of the tutor response. Our results show that fluent Socratic response generation alone is insufficient: existing tutors can improve student accuracy, but often do so with excessive intervention or unsafe acceptance. By training an explicit action policy with Intervention Value and calibrating inference through cost and threshold-based decision rules, \textsc{DICE} improves session-level tutoring while sharply reducing over-intervention and unsafe final-answer acceptance. These findings suggest that scalable LLM tutoring requires not only better pedagogical language generation, but also calibrated decisions about when guidance is actually needed.

\newpage
\bibliographystyle{plain}
\bibliography{ref}

\clearpage
\appendix
\footnotesize

\startcontents[appendix]

{
\hypersetup{
    pdfborder={0 0 0}
}

\section*{Contents of the Appendix}
\printcontents[appendix]{}{1}{}
}

\clearpage
\normalsize

\section{\textsc{DICE-Bench} and \textsc{DICE} Architecture}
\label{app:dicebench_architecture}

\begin{figure}[htbp]
    \centering
    \includegraphics[width=0.98\textwidth]{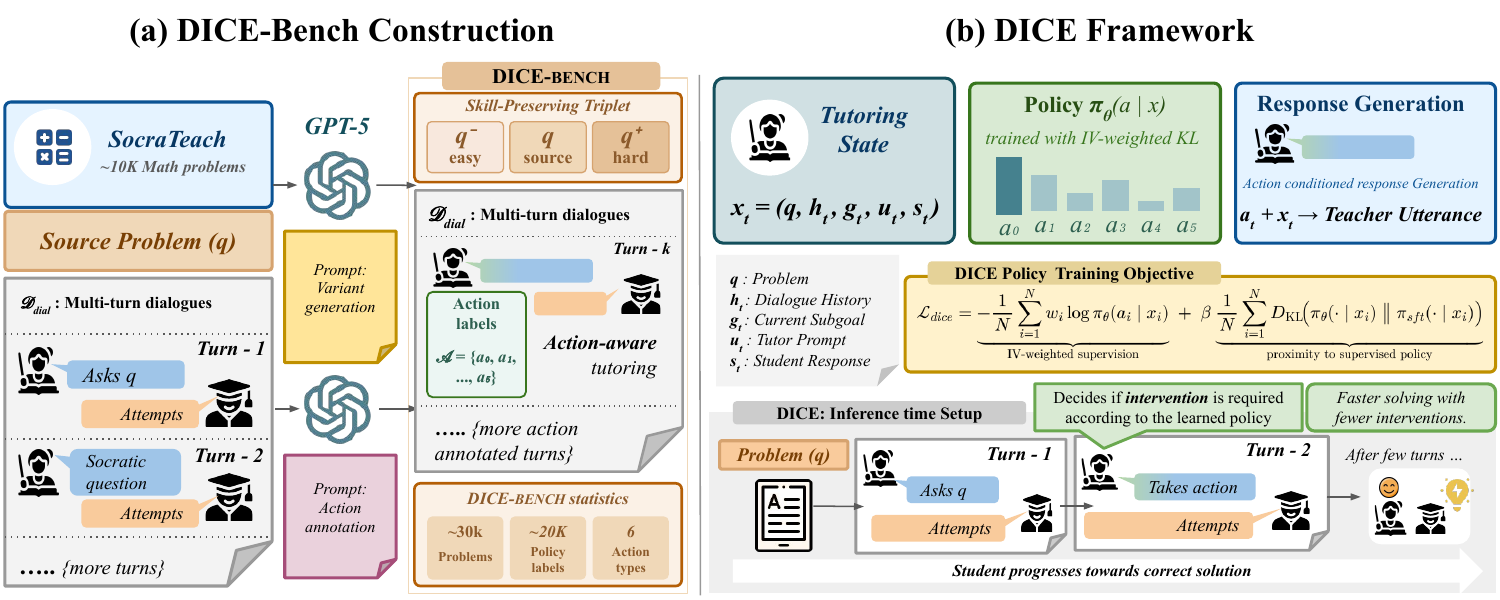}
    \caption{\footnotesize Overview of \textsc{DICE-Bench} construction and the \textsc{DICE} tutoring framework. Panel (a) shows how SocraTeach math problems and multi-turn Socratic dialogues are transformed into skill-preserving problem triplets \((q^{-}, q, q^{+})\) and turn-level action labels. Panel (b) shows the \textsc{DICE} pipeline: a tutoring state is mapped to an explicit pedagogical action policy trained with IV-weighted KL regularization, and the selected action conditions response generation during tutoring.}
    \label{fig:dice_overview}
\end{figure}

\section{DICE-Bench Details}
\label{app:dice_bench_aug}

\subsection{Generation Procedure and Structural Validation}
\label{app:generation_validation}

For each source problem, we prompt a frontier language model with the question, gold analysis, answer, and subgoal decomposition to generate a shared skill descriptor, an easy variant, and a hard variant. The easy variant reduces structural difficulty through fewer steps, simpler quantities, or more direct phrasing, while the hard variant adds a reasoning dependency or more complex operation without changing the underlying skill. Each variant includes its own question, analysis, answer, and subgoal sequence.

\begin{figure}[htbpt]
    \centering
    \begin{subfigure}[t]{0.32\textwidth}
        \centering
        \includegraphics[width=\linewidth]{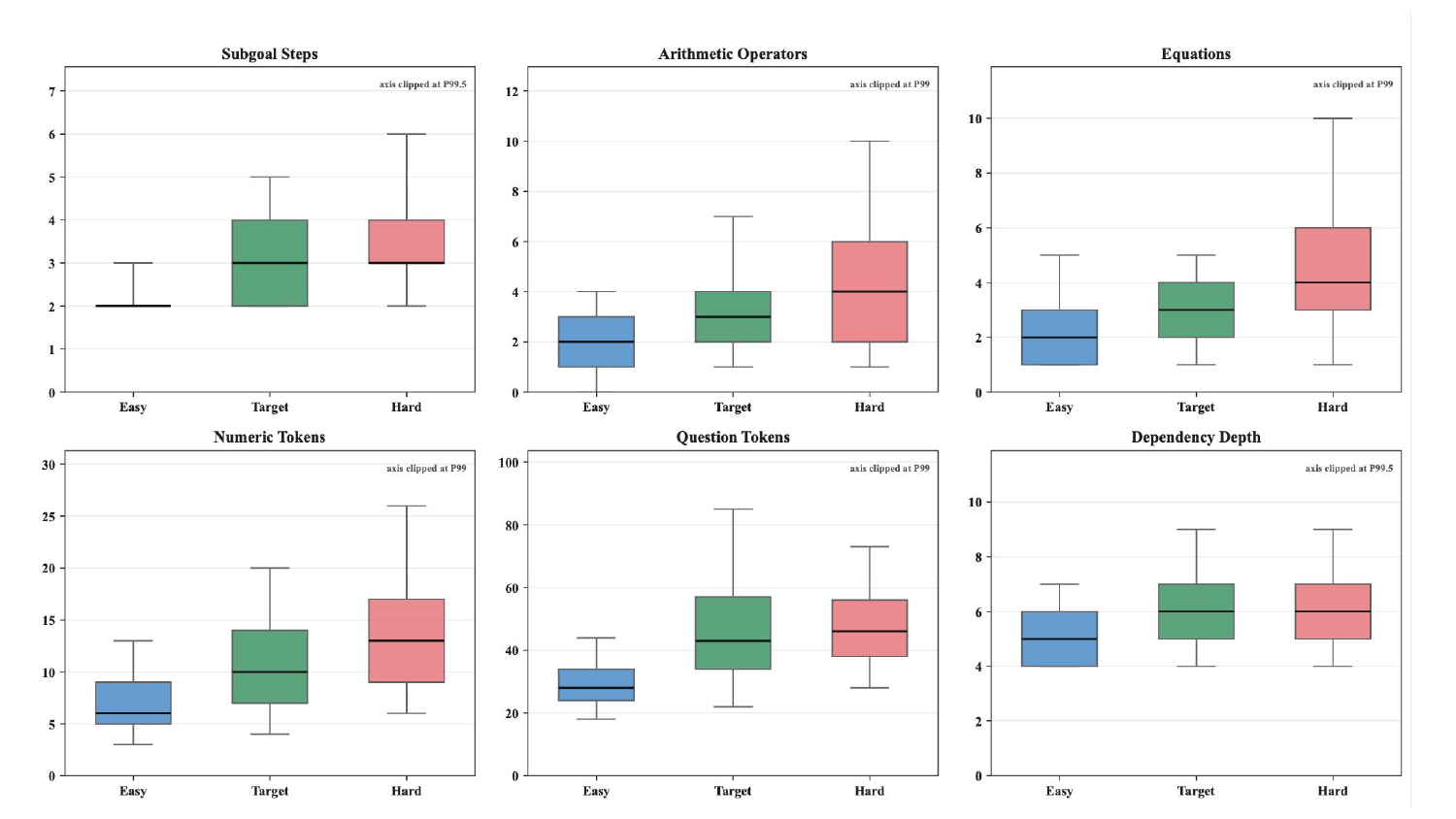}
        \caption{\scriptsize Distribution of structural metrics.}
        \label{fig:difficulty_box_app}
    \end{subfigure}
    \hfill
    \begin{subfigure}[t]{0.32\textwidth}
        \centering
        \includegraphics[width=\linewidth]{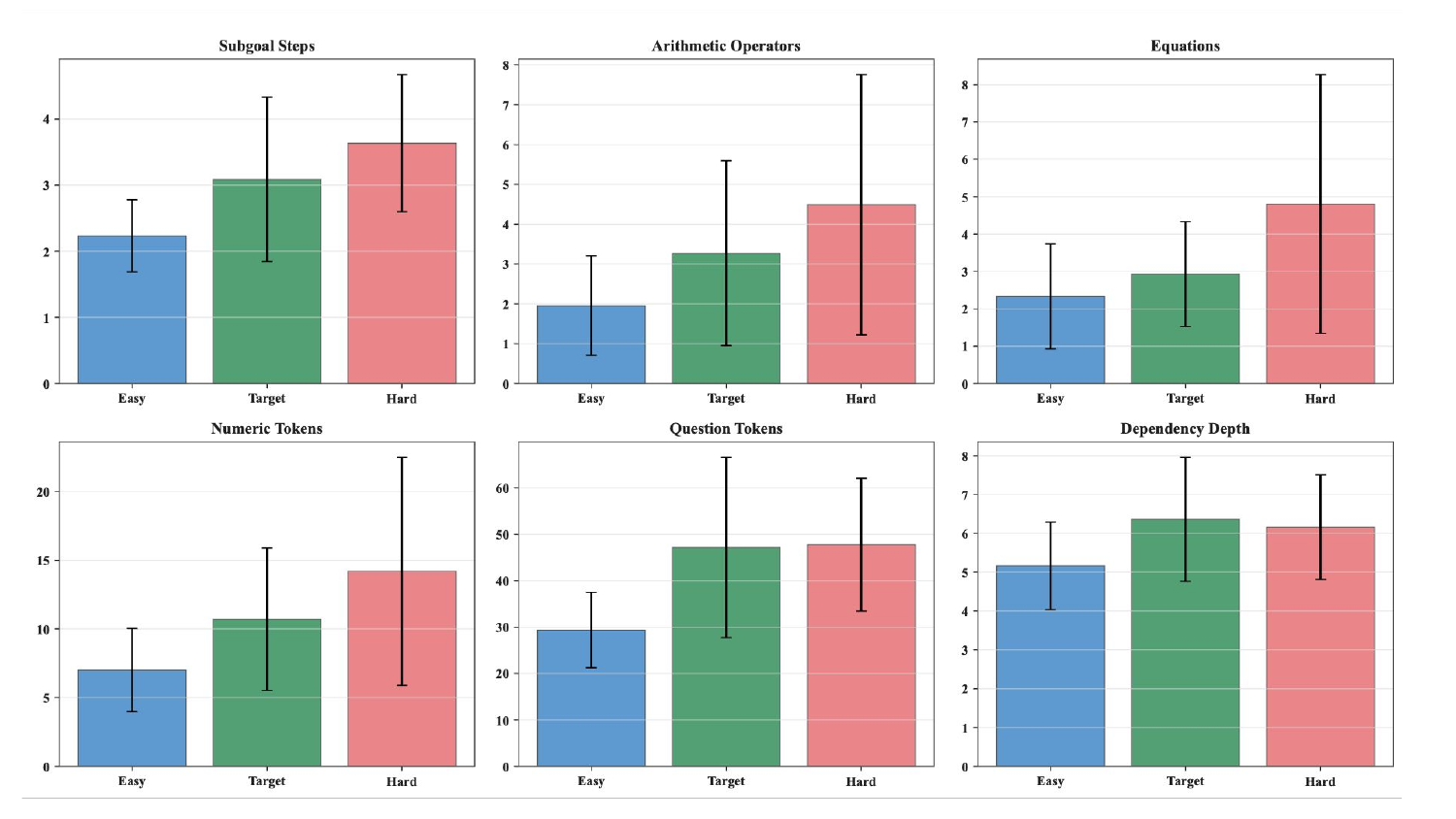}
        \caption{\scriptsize Mean structural metrics.}
        \label{fig:difficulty_bar_app}
    \end{subfigure}
    \hfill
    \begin{subfigure}[t]{0.32\textwidth}
        \centering
        \includegraphics[width=\linewidth]{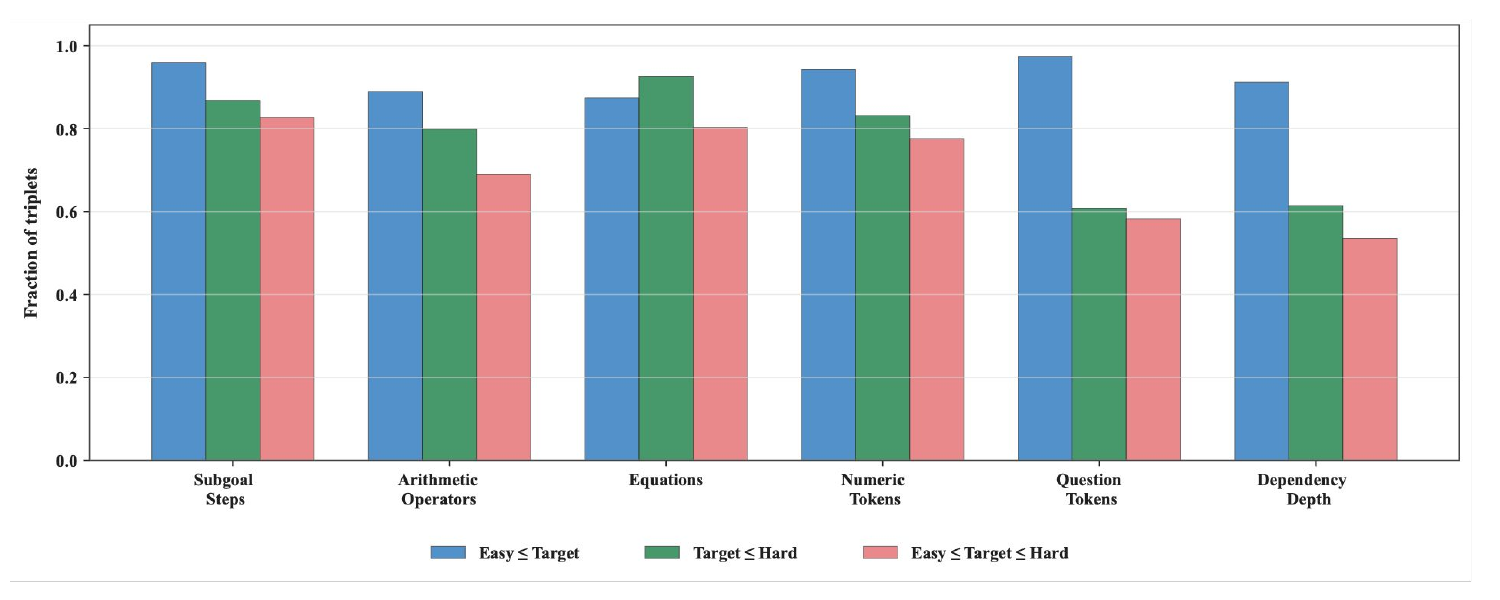}
        \caption{\scriptsize Ordering compliance.}
        \label{fig:difficulty_ordering_app}
    \end{subfigure}
    \caption{\small Structural validation of skill-preserving difficulty variants in \textsc{DICE-Bench}. Easy, source (target), and hard variants are compared across solution-structure and surface-form metrics.}
    \label{fig:difficulty_validation_app}
\end{figure}

The variants are generated independently of tutoring dialogues, so difficulty variation is not tied to teacher behavior or dialogue style. We validate the resulting 10,273 matched triplets using observable structural proxies. As shown in Fig.~\ref{fig:difficulty_box_app}, the triplets form a consistent difficulty ladder across solution-structure metrics. Fig.~\ref{fig:difficulty_bar_app} shows that easy variants reduce subgoal count, arithmetic operators, equations, and numeric quantities, while hard variants increase these metrics. Fig.~\ref{fig:difficulty_ordering_app} further reports the fraction of triplets satisfying the expected easy--source--hard ordering. Dependency depth and question length show weaker source-to-hard separation, suggesting that harder variants primarily increase mathematical structure rather than surface linguistic complexity.

\noindent\textbf{Quality control.}
After generating \(q^{-}\) and \(q^{+}\), we apply two checks for correctness and difficulty validity. First, we use an LLM council of three frontier judges, Claude Sonnet 4.6, Kimi K2.5 Thinking, and Qwen 3.5-122B, to verify arithmetic correctness, answer consistency, and whether the generated analysis supports the stated answer. A variant is retained only when the council finds the problem self-contained, solvable, and arithmetically consistent. Second, we manually audit a random subset of 50 triplets with two independent judges. Each judge rates \emph{solution accuracy} and \emph{difficulty alignment} on a 5-point Likert scale, where solution accuracy measures whether the question, reasoning, and answer are mutually consistent, and difficulty alignment measures whether \(q^{-}\) is easier than \(q\) and \(q^{+}\) is harder while preserving the same skill. The two judges show strong agreement, with an intraclass correlation coefficient of \(0.85\), supporting the reliability of the manual validation.

\subsection{System Prompt for Skill-Preserving Augmentation}
\label{app:augmentation_prompt}

We use a structured prompt to generate easy and hard sibling variants for each source problem. The prompt enforces skill preservation, mathematical correctness, and structured outputs for downstream validation. Fig.~\ref{fig:dicebench_augmentation_prompt} shows the prompt template.

\begin{figure}[htbp]
\centering
\begin{minipage}[t]{0.95\columnwidth}
\begin{tcolorbox}[
  promptbox,
  title={Skill-Preserving Difficulty Augmentation Prompt}
]
\scriptsize

{\bf System Prompt:} {\it
You are an expert math curriculum designer. Given a source math word problem, generate two sibling variants that preserve the same underlying mathematical skill: one easier variant and one harder variant. The easy variant should reduce structural difficulty through fewer reasoning steps, simpler quantities, or more direct wording. The hard variant should add a meaningful reasoning dependency, composed operation, or less direct formulation. All generated problems must be mathematically correct, self-contained, and solvable from the information provided.
}\vspace{1mm}

{\bf User Prompt:} Given the source problem below:\vspace{0.5mm}

\begin{center}
{\tt <SOURCE\_QUESTION>}
\end{center}

with reference analysis, answer, and subgoal decomposition:\vspace{0.5mm}

\begin{center}
{\tt <SOURCE\_ANALYSIS>, <SOURCE\_ANSWER>, <SOURCE\_STEPS>}
\end{center}

Generate a valid JSON object with the following fields:\vspace{0.5mm}
\begin{enumerate}
    \item {\tt target\_skill}: a concise description of the shared mathematical skill.
    \item {\tt easy\_question}: an easier problem testing the same skill.
    \item {\tt easy\_analysis}, {\tt easy\_answer}, {\tt easy\_steps}: a correct solution and subgoal sequence for the easy variant.
    \item {\tt hard\_question}: a harder problem testing the same skill.
    \item {\tt hard\_analysis}, {\tt hard\_answer}, {\tt hard\_steps}: a correct solution and subgoal sequence for the hard variant.
    \item Do not introduce a new mathematical topic, unrelated story context, or ambiguous wording.
    \item Return only valid JSON with no explanations.
\end{enumerate}

\end{tcolorbox}
\end{minipage}

\vspace{-1mm}
\caption{\small Prompt template used for skill-preserving easy/hard problem augmentation in \textsc{DICE-Bench}.}
\label{fig:dicebench_augmentation_prompt}
\end{figure}

\subsection{Turn-Level Policy Annotation Details}
\label{app:policy_annotation_details}

For dialogue-derived annotations \(\mathcal{D}_{\mathrm{dial}}\), we extract decision points from existing SocraTeach conversations. A teacher--student exchange defines \(x_t=(q,h_t,g_t,u_t,s_t)\), where \(h_t\) is the prior dialogue, \(g_t\) the current subgoal, \(u_t\) the tutor prompt, and \(s_t\) the student response. A frontier (GPT-5) annotator assigns \(a_t\in\mathcal{A}\), along with auxiliary diagnostics \(z_t\) for local correctness, global trajectory, and reasoning issue. We use \(z_t\) for inspection and quality control, and \(a_t\) as the policy target. For step-variant annotations \(\mathcal{D}_{\mathrm{step}}\), no dialogue is available for \(q^{-}\) and \(q^{+}\); we therefore construct local states from each variant's subgoal sequence by generating student responses that cover common tutoring conditions, including correct progress, arithmetic error, conceptual confusion, skipped subgoal, weak justification, and student question. Unlike \(\mathcal{D}_{\mathrm{dial}}\), these labels are not conditioned on an existing teacher-led trajectory; they reflect isolated subgoal-level decisions. This distinction supports our later Intervention Value analysis, which asks whether labels derived from different annotation sources differ in their marginal utility over non-intervention.

\noindent\textbf{Quality control.}
Because policy learning is sensitive to noisy action labels, we filter annotations for semantic consistency before training. We standardize label formats, remove duplicate student states, and discard cases where the diagnosis and action disagree, such as accepting an incorrect response or prescribing arithmetic repair for a locally correct step. This yields 20{,}477 clean decision points from 25{,}121 raw annotations. We also merge or remove rare unsupported actions, resulting in the six-action space.

\subsection{System Prompt for Turn-Level Policy Annotation}
\label{app:annotation_prompt}

We use a structured prompt to convert tutoring states into action-labeled policy instances. For dialogue-derived source states, the input includes the dialogue history and current teacher--student exchange. For step-variant states, the same schema is used, but the dialogue history is omitted and the state is constructed directly from the current subgoal. Fig.~\ref{fig:dicebench_annotation_prompt} shows the prompt template.

\begin{figure}[htbp]
\centering
\begin{minipage}[t]{0.95\columnwidth}
\begin{tcolorbox}[
  promptbox,
  title={Turn-Level Tutor-Policy Annotation Prompt}
]
\scriptsize

{\bf System Prompt:} {\it
You are an expert annotator for trajectory-aware math tutoring. Given a tutoring state, diagnose the student's response relative to the current subgoal and select the next tutor action. Distinguish local correctness from global solution trajectory: a response may be locally incomplete even if it moves toward the final answer, or globally ahead while failing to answer the current subgoal. First diagnose the student state, then choose the smallest appropriate pedagogical action. Return only valid JSON.
}\vspace{1mm}

{\bf User Prompt:} Given the tutoring state below:\vspace{0.5mm}

\begin{center}
{\tt <QUESTION>, <GOLD\_ANALYSIS>, <GOLD\_ANSWER>, <SUBGOAL\_STEPS>}
\end{center}

with the current tutoring context:\vspace{0.5mm}

\begin{center}
{\tt <DIALOGUE\_HISTORY>, <CURRENT\_SUBGOAL>, <CURRENT\_TEACHER\_QUESTION>, <CURRENT\_STUDENT\_RESPONSE>}
\end{center}

Return a valid JSON object with the following fields:\vspace{0.5mm}
\begin{enumerate}
    \item {\tt current\_subgoal}: the subgoal most relevant to the current state.
    \item {\tt local\_correctness}: one of {\tt correct}, {\tt partial}, or {\tt incorrect}.
    \item {\tt global\_trajectory}: one of {\tt on\_track}, {\tt ahead}, {\tt off\_track}, or {\tt stalled}.
    \item {\tt reasoning\_issue}: one of {\tt none}, {\tt arithmetic\_error}, {\tt subgoal\_skip}, {\tt conceptual\_gap}, {\tt weak\_justification}, or {\tt student\_question}.
    \item {\tt intervention\_needed}: one of {\tt none}, {\tt light}, or {\tt strong}.
    \item {\tt best\_action}: one of {\tt accept\_and\_advance}, {\tt arithmetic\_repair}, {\tt conceptual\_hint}, {\tt handle\_question}, {\tt ask\_for\_justification}, or {\tt refocus\_subgoal}.
    \item {\tt rationale}: a concise justification for the selected action.
    \item Return only valid JSON with no explanations.
\end{enumerate}

\end{tcolorbox}
\end{minipage}

\vspace{-1mm}
\caption{\small Prompt template used for turn-level tutor-policy annotation in \textsc{DICE-Bench}. Dialogue-derived source states include dialogue history and the observed teacher--student exchange; step-variant states use the same schema but omit dialogue context and are constructed directly from the current subgoal.}
\label{fig:dicebench_annotation_prompt}
\end{figure}

\subsection{Qualitative Examples of Tutor-Policy Decision Points}
\label{app:qual_policy_examples}

Table~\ref{tab:qual_policy_examples} shows representative \textsc{DICE-Bench} decision points. Each example includes the problem context, dialogue state available to the annotator, the current student response, and the resulting tutor-policy label. These cases illustrate why action prediction requires modeling the student's state relative to the current subgoal, not merely checking whether a number appears correct.

\begin{table}[htbp]
\centering
\scriptsize
\caption{\footnotesize Qualitative examples of turn-level tutor-policy annotations for one source problem in \textsc{DICE-Bench}. We show three different dialogue rollouts for the same problem and the unique corresponding tutor actions annotated by a frontier model (GPT-5).}
\label{tab:qual_policy_examples}
\renewcommand{\arraystretch}{1.18}
\setlength{\tabcolsep}{4pt}
\begin{tabularx}{\textwidth}{p{0.16\textwidth}Y p{0.18\textwidth}Y}
\toprule
\rowcolor{gray!25}
\textbf{Action} & \textbf{Problem and dialogue context} & \textbf{Diagnosis} & \textbf{Rationale} \\
\midrule

\actionbadge{acceptbg}{acceptfg}{Accept and advance} &
\textbf{Problem:} John plants a plot of \(3\) trees by \(4\) trees. Each tree gives \(5\) apples, and each apple sells for \(\$0.5\). How much money does John make?

\textbf{T:} How many trees does John plant?

\textbf{S:} John plants \(3\times4=12\) trees. &
-- Correct

-- On track

-- No issue &
The student answers the current subgoal correctly and shows the needed reasoning, so no intervention is necessary. \\

\midrule

\actionbadge{justifybg}{justifyfg}{Ask for justification} &
\textbf{Problem:} John plants a plot of \(3\) trees by \(4\) trees. Each tree gives \(5\) apples, and each apple sells for \(\$0.5\). How much money does John make?

\textbf{T:} How many trees does John plant?

\textbf{S:} Isn't it just \(3+4\) trees?

\textbf{T:} It resembles an array or multiplication. Could you try a different way?

\textbf{S:} \(3\times4=12\) trees.

\textbf{T:} If each of these \(12\) trees gives \(5\) apples, how many apples are there?

\textbf{S:} I don't know. Is it \(60\)? &
-- Correct

-- On track

-- Weak justification &
The answer is correct, but the student gives no reasoning, so the tutor should ask for justification rather than simply advance. \\

\midrule

\actionbadge{repairbg}{repairfg}{Arithmetic repair} &
\textbf{Problem:} John plants a plot of \(3\) trees by \(4\) trees. Each tree gives \(5\) apples, and each apple sells for \(\$0.5\). How much money does John make?

\textbf{T:} How many trees does John plant?

\textbf{S:} John plants \(3\times4=12\) trees.

\textbf{T:} That's right! Now, if each tree gives \(5\) apples, how many apples does John get in total?

\textbf{S:} Yes, I think John gets \(12\times5=70\) apples in total. &
-- Incorrect

-- On track

-- Arithmetic error &
The student selects the correct operation but miscomputes \(12\times5\), so the tutor should repair the arithmetic rather than change the subgoal. \\

\bottomrule
\end{tabularx}
\end{table}

\subsection{Pedagogical Action Space}
\label{app:action_space}

We define the six tutor actions (Table \ref{tab:action_space_definitions}) in \textsc{DICE-Bench} by combining classical views of tutoring as guided problem solving and scaffolded knowledge construction~\cite{simon1971human, fosnot1996constructivism, simon1995reconstructing, anderson1995cognitive, chi2001learning} with prior taxonomies of tutorial dialogue moves, Socratic questioning, feedback, hinting, and error remediation ~\cite{graesser2004autotutor, 10.5555/3737916.3740637, macina-etal-2023-mathdial}. The resulting action space is intentionally compact: it separates non-intervention from major intervention types while avoiding overly fine-grained labels that would be difficult to learn reliably.

\begin{table}[htbp]
\centering
\scriptsize
\caption{\footnotesize Pedagogical action space used for turn-level policy annotation in \textsc{DICE-Bench}.}
\label{tab:action_space_definitions}
\renewcommand{\arraystretch}{1.15}
\begin{tabular}{p{0.22\textwidth}p{0.72\textwidth}}
\toprule
\rowcolor{gray!25}
\textbf{Action} & \textbf{Meaning} \\
\midrule

\textbf{Accept and advance} &
The student has satisfied the current subgoal, so the tutor should acknowledge the response and move to the next step without adding extra scaffolding. \\

\textbf{Arithmetic repair} &
The student is following the right reasoning path but makes a numerical or computation error; the tutor should correct or guide the calculation while preserving the current subgoal. \\

\textbf{Conceptual hint} &
The student shows a misunderstanding of the underlying concept or operation; the tutor should provide a light conceptual cue without directly giving away the full solution. \\

\textbf{Handle question} &
The student asks a clarification or meta-level question; the tutor should answer the question and then return the student to the relevant subgoal. \\

\textbf{Ask for justification} &
The student's answer may be correct or partially correct, but the reasoning is missing or weak; the tutor should prompt the student to explain or justify their step. \\

\textbf{Refocus subgoal} &
The student skips the current subgoal, jumps ahead, or drifts from the intended trajectory; the tutor should redirect attention to the current step. \\

\bottomrule
\end{tabular}
\end{table}

\section{Intervention Value Calibration Details}
\label{app:iv_calibration}
\subsection{Detailed IV Audit Aggregations}
\label{app:iv_audit_details}

For action-level auditing, we compute the action-conditional expected IV,
\[
\overline{\mathrm{IV}}(a)
=
\mathbb{E}_{x:\,a_i=a}
\left[
\widehat{\mathrm{IV}}(x,a)
\right],
\qquad a\in\mathcal{A}^{+}.
\]
Actions with consistently high \(\overline{\mathrm{IV}}\) correspond to robust interventions, while actions with \(\overline{\mathrm{IV}}\approx 0\) indicate possible over-prescription.

For source-level auditing, we compare IV across dialogue-derived and step-level annotations. Let \(\mathcal{D}_{\mathrm{dial}}\) denote original multi-turn tutoring states and \(\mathcal{D}_{\mathrm{step}}\) denote synthetic step-level states. Information asymmetry predicts
\[
\mathbb{E}_{x\sim\mathcal{D}_{\mathrm{dial}}}
\left[
\widehat{\mathrm{IV}}(x,a_i)
\right]
<
\mathbb{E}_{x\sim\mathcal{D}_{\mathrm{step}}}
\left[
\widehat{\mathrm{IV}}(x,a_i)
\right].
\]
Because \(\mathcal{D}_{\mathrm{step}}\) contains both \(q^{-}\) and \(q^{+}\) variants of the same source problems, a persistent gap between \(\mathcal{D}_{\mathrm{dial}}\) and \(\mathcal{D}_{\mathrm{step}}\) is less likely to be explained by problem difficulty alone and more consistent with annotation-source effects.

\subsection{IV-Calibrated Policy Optimization Details}
\label{app:iv_policy}

The weighting rule in Eq.~\ref{eq:weights} treats non-intervention labels as fully reliable because they correspond to the tutor explicitly choosing not to intervene. Active interventions are trusted only in proportion to their estimated marginal value over \(a_0\). High-IV interventions receive larger gradient weight, near-zero-IV interventions contribute little, and negative-IV interventions are clipped to zero so that harmful supervision cannot directly reinforce the policy.

The KL term in Eq.~\ref{eq:dice} regularizes the IV-calibrated policy toward the original supervised policy \(\pi_{\textsc{sft}}\). This is important because IV is estimated using a simulated student and an LLM judge, both of which introduce noise. The hyperparameter \(\beta\geq0\) controls the strength of this regularization: \(\beta=0\) corresponds to pure IV-weighted training, while larger \(\beta\) keeps the policy closer to SFT.

Eq.~\ref{eq:dice} can be viewed as a form of advantage-weighted regression~\cite{peng2019advantageweightedregressionsimplescalable}, where the advantage signal is the counterfactual Intervention Value rather than a learned value function. The KL penalty plays the role of a trust region, preventing the policy from moving too far from supervised behavior. Unlike standard reinforcement learning, the signal is not an absolute return; it is a direct comparison between intervention and non-intervention, making it interpretable as a measure of intervention necessity.

\subsection{Action Selection and Generation Details}
\label{app:generation_details}

\paragraph{MAP action selection.}
A naive policy selects the maximum-probability action,
\begin{equation}
\label{eq:dice_map}
    \hat{a}_t^{\textsc{map}}
    =
    \arg\max_{a\in\mathcal{A}}
    \pi_\theta(a\mid x_t).
\end{equation}
This treats intervention and non-intervention symmetrically, even though intervention carries pedagogical and computational cost.

\paragraph{Cost-sensitive intervention margin.}
The cost-sensitive rule in Eq.~\ref{eq:dice_cost} is equivalent to intervening only when the best active action exceeds non-intervention by a log-probability margin:
\begin{equation}
\label{eq:dice_margin}
    \max_{a\in\mathcal{A}^{+}}
    \left[
        \log \pi_\theta(a\mid x_t)
        -
        \log \pi_\theta(a_0\mid x_t)
    \right]
    >
    \lambda.
\end{equation}
This form shows that \(\lambda\) acts as a log-likelihood-ratio threshold: the tutor intervenes only when there is sufficiently stronger evidence for an active pedagogical action than for letting the student continue.

\paragraph{Thresholded intervention ablation.}
We also evaluate a simpler binary intervention rule. Define
\begin{equation}
\label{eq:p_intervene}
    p_{\mathrm{int}}(x_t)
    =
    \sum_{a\in\mathcal{A}^{+}}\pi_\theta(a\mid x_t)
    =
    1-\pi_\theta(a_0\mid x_t).
\end{equation}
Given threshold \(\tau\in[0,1]\), the policy chooses
\begin{equation}
\label{eq:dice_threshold}
    \hat{a}_t^{\mathrm{thr}}
    =
    \begin{cases}
        a_0, & \text{if } p_{\mathrm{int}}(x_t)\leq \tau,\\[3pt]
        \arg\max_{a\in\mathcal{A}^{+}}\pi_\theta(a\mid x_t),
        & \text{otherwise.}
    \end{cases}
\end{equation}
Unlike \textsc{DICE-Cost}, which compares each active action directly against \(a_0\), this ablation first makes a binary intervention decision and then selects the most likely active action.

\paragraph{Decision--realization factorization.}
The two-stage model can be written as
\begin{equation}
\label{eq:factorization}
    p(u_t\mid x_t)
    =
    \sum_{a\in\mathcal{A}}
    p_\phi(u_t\mid x_t,a)\,
    \pi_\theta(a\mid x_t).
\end{equation}
At inference, \textsc{DICE} does not marginalize over all actions. It selects a calibrated action using Eq.~\ref{eq:dice_cost} and generates only from the corresponding action-conditioned distribution. This keeps the policy decision auditable and makes intervention behavior controllable through \(\lambda\).

\section{Theoretical Analysis}
\label{appendix:theory}

We analyze \textsc{DICE} from a decision-theoretic perspective. We formalize why cost-sensitive inference induces calibrated intervention (\S\ref{appendix:cost_sensitive}), why an oracle adaptive intervention policy dominates uniform intervention under the same budget (\S\ref{appendix:oracle_adaptive}), how the IV-weighted KL objective modifies the supervised policy (\S\ref{appendix:iv_kl_theory}), and how provenance-dependent IV gaps can be used as a diagnostic for annotation effects (\S\ref{appendix:provenance_theory}). Throughout, \(a_0\) denotes the non-intervention action and \(\mathcal{A}^{+}=\mathcal{A}\setminus\{a_0\}\) denotes the set of active interventions.

\paragraph{Notation and basic properties.}
For a tutoring state \(x\) and action \(a\in\mathcal{A}\), let
\[
\mu(x,a)
=
\mathbb{E}_{y\sim p_{\mathrm{student}}(\cdot\mid x,T(a,x))}
\left[
r(x,a,y)
\right]
\]
denote expected student progress after the tutor realizes action \(a\) as teacher utterance \(T(a,x)\). The progress score satisfies \(r\in[0,1]\). The Intervention Value is
\[
\mathrm{IV}(x,a)
=
\mu(x,a)-\mu(x,a_0).
\]
Thus, \(\mathrm{IV}(x,a_0)=0\), \(\mathrm{IV}(x,a)\in[-1,1]\), and the sign of IV directly characterizes marginal intervention utility: \(\mathrm{IV}>0\) indicates that intervention improves expected progress over non-intervention, \(\mathrm{IV}\approx0\) indicates little marginal benefit, and \(\mathrm{IV}<0\) indicates that intervention is worse than non-intervention. For active interventions, the IV-derived training weight
\[
w(x,a)=\max(\mathrm{IV}(x,a),0),\qquad a\in\mathcal{A}^{+},
\]
suppresses harmful or unnecessary intervention labels, while non-intervention labels use \(w(x,a_0)=1\).

\subsection{Cost-Sensitive Intervention Selection}
\label{appendix:cost_sensitive}

At inference time, \textsc{DICE} selects actions by maximizing a cost-sensitive score
\[
S_\lambda(a,x)
=
\log \pi_\theta(a\mid x)
-
\lambda\,\mathbf{1}[a\neq a_0],
\]
where \(\lambda\geq0\) is the intervention penalty. This rule has a simple likelihood-ratio interpretation.

\begin{proposition}[Cost-sensitive inference as a likelihood-ratio test]
\label{prop:cost_margin}
Let
\[
a^{+}_{\theta}(x)
=
\arg\max_{a\in\mathcal{A}^{+}}
\pi_\theta(a\mid x)
\]
be the most likely active intervention under the policy. The cost-sensitive rule
\[
\hat{a}_\lambda(x)
=
\arg\max_{a\in\mathcal{A}}
S_\lambda(a,x)
\]
selects an active intervention if and only if
\begin{equation}
\label{eq:margin_test}
    \frac{
    \pi_\theta(a^{+}_{\theta}(x)\mid x)
    }{
    \pi_\theta(a_0\mid x)
    }
    >
    e^{\lambda}.
\end{equation}
\end{proposition}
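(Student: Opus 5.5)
The argument compares the best active action with the non-intervention action $a_0$ under the score $S_\lambda$; it is an elementary argmax computation. First, restrict $S_\lambda(\cdot,x)$ to $\mathcal{A}^{+}$. There the penalty is the constant $-\lambda$, so $\max_{a\in\mathcal{A}^{+}}S_\lambda(a,x)=\log\pi_\theta(a^{+}_\theta(x)\mid x)-\lambda$, attained at $a^{+}_\theta(x)$ because $\log$ is strictly increasing. On $a_0$ the score is $S_\lambda(a_0,x)=\log\pi_\theta(a_0\mid x)$. Hence $\hat a_\lambda(x)\in\mathcal{A}^{+}$ exactly when
\[
\log\pi_\theta(a^{+}_\theta(x)\mid x)-\lambda>\log\pi_\theta(a_0\mid x).
\]
This is the margin form in Eq.~\eqref{eq:dice_margin}. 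Exponentiating both sides gives Eq.~\eqref{eq:margin_test}.

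Two technical points need to be fixed explicitly. The first is well-definedness: the logarithms require $\pi_\theta(a\mid x)>0$. A softmax policy satisfies this, and I will state it as an assumption. Alternatively, one can adopt the conventions $\log 0=-\infty$ and $c/0=+\infty$ for $c>0$, under which the equivalence still holds. The second is tie-breaking. Ties inside $\mathcal{A}^{+}$ do not matter, since any maximizer of $\pi_\theta$ over $\mathcal{A}^{+}$ has the same probability and the ratio is unchanged. The tie between $a_0$ and the best active action, i.e.\ equality in Eq.~\eqref{eq:margin_test}, must resolve in favor of $a_0$ for the strict inequality to be exact. I will state this convention, which matches the paper's intent of treating non-intervention as the default. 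For the converse direction: if the strict inequality fails, then $S_\lambda(a_0,x)\ge S_\lambda(a,x)$ for every $a\in\mathcal{A}^{+}$, so under this convention the rule selects $a_0$.

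I expect no real obstacle; the tie-breaking convention is the only point that needs care. I will close with the remark that $\lambda=0$ recovers MAP selection (Eq.~\eqref{eq:dice_map}) on the intervene-or-not decision. As $\lambda$ increases, the set of states where the rule intervenes shrinks monotonically, since the threshold $e^{\lambda}$ is increasing in $\lambda$.
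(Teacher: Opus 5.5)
Your proposal is correct and follows the paper's proof essentially step for step: compare $S_\lambda(a_0,x)=\log\pi_\theta(a_0\mid x)$ with the best active score $\log\pi_\theta(a^{+}_\theta(x)\mid x)-\lambda$, rearrange, and exponentiate. Your explicit treatment of positivity and of the tie case, with equality resolved in favor of $a_0$, states conventions that the paper's argument relies on without saying so.
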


\begin{proof}
The non-intervention score is
\[
S_\lambda(a_0,x)=\log\pi_\theta(a_0\mid x),
\]
since \(a_0\) incurs no intervention penalty. The best active-intervention score is
\[
\max_{a\in\mathcal{A}^{+}}S_\lambda(a,x)
=
\log\pi_\theta(a^{+}_{\theta}(x)\mid x)-\lambda.
\]
An active intervention is selected if and only if
\[
\log\pi_\theta(a^{+}_{\theta}(x)\mid x)-\lambda
>
\log\pi_\theta(a_0\mid x).
\]
Rearranging gives
\[
\log\frac{
\pi_\theta(a^{+}_{\theta}(x)\mid x)
}{
\pi_\theta(a_0\mid x)
}
>
\lambda.
\]
Exponentiating both sides yields Eq.~\ref{eq:margin_test}. The proof is complete.
\end{proof}

Thus, \(\lambda\) is not merely a decoding hyperparameter. It is a log-likelihood-ratio threshold: the tutor intervenes only when the best active action is at least \(e^\lambda\) times more likely than non-intervention under the learned policy.

\begin{proposition}[Monotonicity of intervention rate]
\label{prop:lambda_monotone}
For a fixed policy \(\pi_\theta\) and a fixed distribution over tutoring states, the intervention rate under \textsc{DICE-Cost} is non-increasing in \(\lambda\).
\end{proposition}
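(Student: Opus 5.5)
Proposition~\ref{prop:cost_margin} reduces the claim to a pointwise statement about the event of intervening, which can then be averaged over states. For each state \(x\), define the log-likelihood ratio
\[
\Lambda(x)=\log\frac{\pi_\theta(a^{+}_{\theta}(x)\mid x)}{\pi_\theta(a_0\mid x)}\in[-\infty,\infty].
\]
The conventions are \(\Lambda(x)=+\infty\) if \(\pi_\theta(a_0\mid x)=0\) and \(\Lambda(x)=-\infty\) if all active actions have zero probability. By Proposition~\ref{prop:cost_margin}, \textsc{DICE-Cost} intervenes at \(x\) if and only if \(\Lambda(x)>\lambda\). Since \(\pi_\theta\) is fixed, \(\Lambda(x)\) does not depend on \(\lambda\). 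This is the key structural fact: \(\lambda\) enters only as the threshold, not in the statistic.

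The pointwise step is as follows. Fix \(0\le\lambda_1\le\lambda_2\). If \(\Lambda(x)>\lambda_2\), then \(\Lambda(x)>\lambda_1\). Hence the intervention sets are nested:
\[
I_{\lambda_2}=\{x:\Lambda(x)>\lambda_2\}\subseteq I_{\lambda_1}=\{x:\Lambda(x)>\lambda_1\},
\]
and so \(\mathbf{1}[\hat{a}_{\lambda_2}(x)\neq a_0]\le\mathbf{1}[\hat{a}_{\lambda_1}(x)\neq a_0]\) for every \(x\).

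The averaging step follows. Let \(D\) be the fixed state distribution. The intervention rate is \(R(\lambda)=\Pr_{x\sim D}[\Lambda(x)>\lambda]=\mathbb{E}_D\bigl[\mathbf{1}[\Lambda(x)>\lambda]\bigr]\). Monotonicity of expectation applied to the pointwise inequality gives \(R(\lambda_2)\le R(\lambda_1)\). Equivalently, \(R(\lambda)=1-F_\Lambda(\lambda)\) is a survival function, which is non-increasing. Measurability of \(\Lambda\) is needed here, but it is immediate because the state space is treated as discrete (token sequences) or because \(\Lambda\) is a composition of measurable maps.

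The only real obstacle is tie-breaking in the \(\arg\max\). When \(\Lambda(x)=\lambda\) exactly, the scores of \(a_0\) and \(a^{+}_\theta(x)\) coincide, so the rule's behavior depends on convention. Proposition~\ref{prop:cost_margin} is stated with strict inequality, which corresponds to breaking ties in favor of \(a_0\). I would adopt this convention explicitly, and note that the pointwise nesting argument needs only some fixed, \(\lambda\)-independent tie rule. Under a randomized tie rule, the same argument applies to the expected indicator: it equals \(1\) on \(\{\Lambda>\lambda\}\), some \(c\in[0,1]\) on \(\{\Lambda=\lambda\}\), and \(0\) otherwise, and this is still non-increasing in \(\lambda\). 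Ties among active actions do not matter, since they do not affect whether the chosen action is active.
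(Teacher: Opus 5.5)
Your proof is correct and takes essentially the same route as the paper: Proposition~\ref{prop:cost_margin} reduces intervention to a $\lambda$-independent margin test $m_\theta(x)>\lambda$, the intervention sets are nested in $\lambda$, and taking expectations gives the result. Your treatment of infinite log-ratios and tie-breaking conventions is extra care that the paper leaves implicit, but it does not change the argument.
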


\begin{proof}
For each state \(x\), define the intervention margin
\[
m_\theta(x)
=
\log\pi_\theta(a^{+}_{\theta}(x)\mid x)
-
\log\pi_\theta(a_0\mid x).
\]
By Proposition~\ref{prop:cost_margin}, the policy intervenes at \(x\) if and only if \(m_\theta(x)>\lambda\). For \(\lambda_1\leq\lambda_2\),
\[
\{x:m_\theta(x)>\lambda_2\}
\subseteq
\{x:m_\theta(x)>\lambda_1\}.
\]
Therefore,
\[
\mathbb{E}_x[\mathbf{1}[m_\theta(x)>\lambda]]
\]
is non-increasing in \(\lambda\). The proof is complete.
\end{proof}

This proposition justifies tuning \(\lambda\) on validation data: increasing \(\lambda\) makes the tutor more selective, producing a monotone intervention-frequency curve.

\subsection{Oracle Adaptive Intervention Dominates Uniform Intervention}
\label{appendix:oracle_adaptive}

We next formalize why per-state intervention decisions can improve over uniform intervention strategies. This result is stated for an oracle adaptive policy that has access to the true intervention values. \textsc{DICE} should be interpreted as a learned approximation to this oracle policy.

\begin{definition}[Binary intervention utility]
\label{def:binary_iv}
For a tutoring state \(x\), define the best active intervention value
\[
V(x)
=
\max_{a\in\mathcal{A}^{+}}
\mathrm{IV}(x,a).
\]
A state is intervention-beneficial if \(V(x)>0\), and non-beneficial otherwise.
\end{definition}

\begin{definition}[Intervention distortion]
\label{def:intervention_distortion}
For a binary intervention decision \(d\in\{0,1\}\), where \(d=1\) means the tutor intervenes and \(d=0\) means the tutor does not intervene, define the intervention distortion
\begin{equation}
\label{eq:intervention_distortion}
\mathcal{D}(d,x)
=
\begin{cases}
V(x), & d=0 \text{ and } V(x)>0,\\[3pt]
- V(x), & d=1 \text{ and } V(x)\leq0,\\[3pt]
0, & \text{otherwise}.
\end{cases}
\end{equation}
\end{definition}

This distortion captures two errors: under-intervention, where the tutor does not intervene even though the best active intervention has positive value, and over-intervention, where the tutor intervenes despite non-positive intervention value.

\begin{definition}[Uniform and oracle adaptive policies]
\label{def:uniform_adaptive}
A uniform intervention policy intervenes with fixed probability \(\rho\in[0,1]\), independent of \(x\). An oracle adaptive policy selects an input-dependent decision \(d_{\mathrm{oracle}}(x)\in\{0,1\}\) under the same average budget:
\[
\mathbb{E}_x[d_{\mathrm{oracle}}(x)] = \rho.
\]
Among all policies satisfying this budget, the oracle adaptive policy selects the \(\rho\)-fraction of states with the largest values of \(V(x)\).
\end{definition}

\begin{theorem}[Oracle adaptive intervention dominates uniform intervention]
\label{thm:adaptive_dominates}
Under the same average intervention budget \(\rho\), the oracle adaptive policy achieves no larger expected distortion than the uniform intervention policy:
\begin{equation}
\label{eq:adaptive_dominance}
\mathbb{E}_x
\left[
\mathcal{D}(d_{\mathrm{oracle}}(x),x)
\right]
\leq
\mathbb{E}_x
\left[
\mathcal{D}(d_{\mathrm{unif}}(x),x)
\right].
\end{equation}
\end{theorem}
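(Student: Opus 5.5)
The plan is to reduce the distortion to a linear function of the decision, so that the theorem becomes the statement that the top-$\rho$ fraction of states has an average value at least the overall average. The first step is a pointwise identity. For every $x$ and every $d\in\{0,1\}$,
\[
\mathcal{D}(d,x)=V(x)^{+}-d\,V(x),\qquad V(x)^{+}=\max(V(x),0).
\]
This can be checked directly against the cases of Definition~\ref{def:intervention_distortion}:
\begin{itemize}
\item $d=0$ gives $V^{+}$, which equals $V$ when $V>0$ and $0$ otherwise;
\item $d=1$ with $V>0$ gives $V-V=0$;
\item $d=1$ with $V\le 0$ gives $0-V=-V$.
\end{itemize}
Since $\mathrm{IV}\in[-1,1]$, $V$ is bounded and every expectation below is finite. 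Taking expectations, and allowing $d$ to be randomized (in which case $d$ is replaced by its conditional mean $\bar d(x)\in[0,1]$), gives
\[
\mathbb{E}_x[\mathcal{D}(d,x)]=\mathbb{E}_x[V^{+}(x)]-\mathbb{E}_x[\bar d(x)V(x)].
\]
The first term does not depend on the policy. So it suffices to show that $\mathbb{E}[\bar d_{\mathrm{oracle}}V]\ge\mathbb{E}[\bar d_{\mathrm{unif}}V]$.

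The second step evaluates the uniform policy. Its decision is independent of $x$ with $\bar d_{\mathrm{unif}}\equiv\rho$, so $\mathbb{E}[\bar d_{\mathrm{unif}}V]=\rho\,\mathbb{E}[V]$.

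The third step handles the oracle. Let $t$ be an upper $\rho$-quantile of $V$, so that the oracle has $\bar d_{\mathrm{oracle}}(x)=1$ when $V(x)>t$ and $\bar d_{\mathrm{oracle}}(x)=0$ when $V(x)<t$. On the atom $\{V=t\}$, if there is one, the oracle randomizes so that $\mathbb{E}[\bar d_{\mathrm{oracle}}]=\rho$ exactly. I read ``selects the $\rho$-fraction with largest $V$'' in Definition~\ref{def:uniform_adaptive} in this sense. The key inequality is
\[
\mathbb{E}\bigl[(\bar d_{\mathrm{oracle}}(x)-\rho)(V(x)-t)\bigr]\ge 0.
\]
It holds pointwise:
\begin{itemize}
\item where $V>t$, both factors are nonnegative;
\item where $V<t$, both factors are nonpositive;
\item where $V=t$, the product vanishes.
\end{itemize}
Because $\mathbb{E}[\bar d_{\mathrm{oracle}}-\rho]=0$ by the budget constraint, the $t$ term drops out. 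This leaves $\mathbb{E}[\bar d_{\mathrm{oracle}}V]\ge\rho\,\mathbb{E}[V]$, which combined with the identity gives Eq.~\ref{eq:adaptive_dominance}. The same argument shows that the oracle is in fact optimal among all budget-$\rho$ policies, not just better than the uniform one.

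The main obstacle I expect is the existence and exactness of the ``top $\rho$-fraction'' when the distribution of $V$ has atoms or the state space is finite. With deterministic $d\in\{0,1\}$, an exact budget of $\rho$ may be impossible to meet. I would handle this by permitting randomized tie-breaking on $\{V=t\}$, as above, which is consistent with the uniform policy itself being randomized. I would also note that the quantile inequality needs only $\bar d_{\mathrm{oracle}}\in[0,1]$ on the tie set.
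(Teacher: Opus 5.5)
Your proposal is correct and follows the paper's proof: the same pointwise identity $\mathcal{D}(d,x)=\max(V(x),0)-d\,V(x)$, and the same observation that only $\mathbb{E}_x[d(x)V(x)]$ depends on the policy. The paper simply asserts that the top-$\rho$ fraction maximizes this term, and it also implicitly treats the randomized uniform policy as a ``binary'' one; your quantile inequality $\mathbb{E}[(\bar d_{\mathrm{oracle}}-\rho)(V-t)]\ge 0$, together with randomized tie-breaking on $\{V=t\}$, supplies the justification the paper omits, so your version is the more rigorous of the two.
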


\begin{proof}
Let \(V(x)\) be the best active intervention value. For any binary decision \(d\), the distortion in Eq.~\ref{eq:intervention_distortion} can be written as
\[
\mathcal{D}(d,x)
=
\max(V(x),0)
-
d\,V(x).
\]
To verify this identity, consider two cases. If \(V(x)>0\), then \(\max(V(x),0)-dV(x)=V(x)\) when \(d=0\), and \(0\) when \(d=1\). If \(V(x)\leq0\), then \(\max(V(x),0)-dV(x)=0\) when \(d=0\), and \(-V(x)\) when \(d=1\). This matches Eq.~\ref{eq:intervention_distortion}.

Taking expectation,
\[
\mathbb{E}_x[\mathcal{D}(d,x)]
=
\mathbb{E}_x[\max(V(x),0)]
-
\mathbb{E}_x[d(x)V(x)].
\]
The first term is independent of the intervention policy. Therefore, minimizing expected distortion under the budget constraint \(\mathbb{E}_x[d(x)]=\rho\) is equivalent to maximizing
\[
\mathbb{E}_x[d(x)V(x)].
\]
Among all binary policies with average intervention budget \(\rho\), this quantity is maximized by assigning \(d(x)=1\) to the \(\rho\)-fraction of states with the largest values of \(V(x)\). This is exactly the oracle adaptive policy by Definition~\ref{def:uniform_adaptive}. Hence no uniform policy with the same average budget can achieve lower expected distortion. The proof is complete.
\end{proof}

\begin{remark}
This theorem formalizes the advantage of intervention calibration. A response-generation tutor that always speaks corresponds to a uniform intervention strategy with \(\rho=1\). More generally, a fixed-rate tutor allocates intervention without considering the state-specific value \(V(x)\). In contrast, \textsc{DICE} learns a state-dependent policy that approximates the oracle ranking induced by \(V(x)\), concentrating interventions on states where they are expected to improve student progress.
\end{remark}

\subsection{IV-Weighted KL Policy Optimization}
\label{appendix:iv_kl_theory}

We characterize the optimal policy under the \textsc{DICE} training objective. For a single tutoring state \(x\) with annotated action \(a^\star\) and IV-derived weight \(w\geq0\), consider the state-wise objective
\[
\mathcal{L}_{x}(\pi)
=
-w\log\pi(a^\star)
+
\beta
D_{\mathrm{KL}}
\left(
\pi(\cdot)
\,\middle\|\,
\pi_{\textsc{sft}}(\cdot\mid x)
\right),
\]
where \(\pi\) ranges over distributions on \(\mathcal{A}\), and \(\beta>0\). Let \(q(a)=\pi_{\textsc{sft}}(a\mid x)\).

\begin{proposition}[Fixed-point characterization of the IV-KL minimizer]
\label{prop:kl_conservative}
Assume \(q(a)>0\) for all \(a\in\mathcal{A}\). The unique minimizer \(\pi^\star\) of \(\mathcal{L}_{x}(\pi)\) satisfies
\begin{equation}
\label{eq:implicit_solution}
\pi^\star(a)
\propto
q(a)
\exp
\left(
\frac{
w\,\mathbf{1}[a=a^\star]
}{
\beta\,\pi^\star(a^\star)
}
\right).
\end{equation}
This is an implicit fixed-point equation in \(\pi^\star(a^\star)\). It has the following qualitative consequences:
\begin{enumerate}
    \item If \(w=0\), then \(\pi^\star=q\).
    \item If \(w>0\), then \(\pi^\star(a^\star)>q(a^\star)\).
    \item If \(w>0\), then probability mass is shifted from non-annotated actions toward \(a^\star\).
\end{enumerate}
\end{proposition}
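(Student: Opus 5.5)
We will treat the problem as convex optimization over the probability simplex and read off the stationarity conditions. Write \(\mathcal{L}_x(\pi) = -w\log\pi(a^\star) + \beta\sum_a \pi(a)\log\frac{\pi(a)}{q(a)}\).

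\textbf{Existence and uniqueness.} For \(\beta>0\), the KL term is strictly convex in \(\pi\). The term \(-w\log\pi(a^\star)\) is convex, so \(\mathcal{L}_x\) is strictly convex on the simplex. Existence follows from two observations. If \(w>0\), then \(\mathcal{L}_x\to\infty\) as \(\pi(a^\star)\to0\), so we may restrict to a compact set bounded away from that face; if \(w=0\), the KL term alone is continuous on the compact simplex. Together with strict convexity, this gives a unique minimizer.

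\textbf{Interiority and stationarity.} We next argue that \(\pi^\star\) is interior. At the boundary \(\pi(a)=0\), the derivative \(\beta(\log(\pi(a)/q(a))+1)\) tends to \(-\infty\), so moving a small amount of mass onto \(a\) strictly decreases the objective. This uses \(q(a)>0\); for \(a^\star\) with \(w>0\), the log barrier already excludes the boundary. With interiority established, we introduce a Lagrange multiplier \(\nu\) for \(\sum_a\pi(a)=1\) and set derivatives to zero:
\[
-\frac{w\,\mathbf{1}[a=a^\star]}{\pi^\star(a^\star)} + \beta\left(\log\frac{\pi^\star(a)}{q(a)} + 1\right) + \nu = 0 .
\]
Solving for \(\pi^\star(a)\) gives \(\pi^\star(a) = q(a)\exp\!\bigl(w\mathbf{1}[a=a^\star]/(\beta\pi^\star(a^\star))\bigr)\,e^{-1-\nu/\beta}\). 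The constant factor is fixed by normalization, which yields Eq.~\ref{eq:implicit_solution}.

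\textbf{Consequences.} Set \(c = w/(\beta\pi^\star(a^\star))\ge 0\). Then \(\pi^\star(a^\star) = q(a^\star)e^{c}/Z\) and \(\pi^\star(a) = q(a)/Z\) for \(a\neq a^\star\), where \(Z = 1 + q(a^\star)(e^c-1)\).

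\begin{enumerate}
\item If \(w=0\), then \(c=0\) and \(Z=1\), so \(\pi^\star=q\). Alternatively, this follows directly because KL is uniquely minimized at \(q\).
\item If \(w>0\), then \(c>0\). Since \(q(a)>0\) for every action, \(q(a^\star)<1\) whenever \(|\mathcal{A}|\ge2\), so
\[
\frac{\pi^\star(a^\star)}{q(a^\star)} = \frac{e^c}{1+q(a^\star)(e^c-1)} > 1 .
\]
\item For each \(a\neq a^\star\), \(\pi^\star(a) = q(a)/Z < q(a)\) because \(Z>1\). Every non-annotated action therefore loses mass proportionally, and this mass is transferred to \(a^\star\).
\end{enumerate}

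\textbf{Main obstacle.} The step needing the most care is the interiority argument, which justifies the Lagrangian stationarity conditions without KKT inequality multipliers. A second subtlety is that the fixed-point equation is only implicit. We handle this by noting that it has a unique consistent solution, because the minimizer is unique, so we never need to solve for \(\pi^\star(a^\star)\) explicitly.
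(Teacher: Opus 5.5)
Your proposal is correct and takes the same route as the paper: Lagrangian stationarity on the simplex, exponentiation with the constant absorbed into the normalizer, and strict convexity for uniqueness. Your additions make explicit steps that the paper's proof leaves implicit. These are the existence and interiority arguments that justify using only the equality multiplier, the explicit normalizer \(Z = 1 + q(a^\star)(e^{c}-1)\) (which gives the stronger per-action statement \(\pi^\star(a) = q(a)/Z < q(a)\) for \(a\neq a^\star\)), and the observation that item 2 needs \(|\mathcal{A}|\ge 2\).
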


\begin{proof}
We minimize
\[
-w\log\pi(a^\star)
+
\beta
\sum_{a\in\mathcal{A}}
\pi(a)\log\frac{\pi(a)}{q(a)}
\]
subject to \(\sum_{a}\pi(a)=1\). The Lagrangian is
\[
\mathcal{J}(\pi,\eta)
=
-w\log\pi(a^\star)
+
\beta
\sum_{a\in\mathcal{A}}
\pi(a)\log\frac{\pi(a)}{q(a)}
+
\eta\left(\sum_{a\in\mathcal{A}}\pi(a)-1\right).
\]
For any \(a\in\mathcal{A}\), stationarity gives
\[
\frac{\partial \mathcal{J}}{\partial \pi(a)}
=
-\frac{w\,\mathbf{1}[a=a^\star]}{\pi(a^\star)}
+
\beta
\left(
\log\frac{\pi(a)}{q(a)}+1
\right)
+
\eta
=
0.
\]
Rearranging,
\[
\log\frac{\pi(a)}{q(a)}
=
\frac{
w\,\mathbf{1}[a=a^\star]
}{
\beta\,\pi(a^\star)
}
-
1
-
\frac{\eta}{\beta}.
\]
Exponentiating and absorbing all terms independent of \(a\) into the normalization constant yields Eq.~\ref{eq:implicit_solution}. The objective is strictly convex on the probability simplex when \(\beta>0\) and \(q(a)>0\) for all \(a\), so the minimizer is unique.

If \(w=0\), the exponential factor is the same for every action, so \(\pi^\star=q\). If \(w>0\), the exponential factor for \(a^\star\) is strictly larger than the factor for any \(a\neq a^\star\), so the ratio \(\pi^\star(a^\star)/q(a^\star)\) is larger than \(\pi^\star(a)/q(a)\) for all \(a\neq a^\star\). Since probabilities sum to one, this implies \(\pi^\star(a^\star)>q(a^\star)\), with mass redistributed from other actions. The proof is complete.
\end{proof}

\begin{corollary}[Effect of the KL coefficient]
\label{cor:beta_effect}
For fixed \(w>0\), \(\beta\) controls the strength of IV-based policy correction:
\begin{enumerate}
    \item As \(\beta\to\infty\), \(\pi^\star\to\pi_{\textsc{sft}}(\cdot\mid x)\).
    \item As \(\beta\to0^{+}\), \(\pi^\star\) concentrates on \(a^\star\).
\end{enumerate}
\end{corollary}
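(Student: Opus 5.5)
The plan is to reduce the fixed-point equation of Proposition~\ref{prop:kl_conservative} to a single scalar equation for \(p:=\pi^\star(a^\star)\), and then read off both limits from it. Write \(q^\star:=q(a^\star)\in(0,1)\); we have \(q^\star<1\) because \(|\mathcal{A}|\ge 2\) and \(q(a)>0\) for every \(a\). By Eq.~\ref{eq:implicit_solution}, every \(a\neq a^\star\) receives the same exponential factor. Hence \(\pi^\star(a)=q(a)(1-p)/(1-q^\star)\) for \(a\neq a^\star\), so the non-annotated actions keep their relative SFT proportions. Comparing the \(a^\star\) entry with the rest then gives the odds identity
\[
\frac{p}{1-p}=\frac{q^\star}{1-q^\star}\exp\!\left(\frac{w}{\beta p}\right).
\]
With this identity, the whole corollary becomes a statement about a scalar \(p=p(\beta)\in(0,1)\). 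Note that \(p<1\), since the minimizer has full support because of the KL term.

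For \(\beta\to\infty\), I will use part 2 of Proposition~\ref{prop:kl_conservative}, which gives \(p>q^\star\). This yields the \(\beta\)-uniform bound
\[
0<\frac{w}{\beta p}<\frac{w}{\beta q^\star}\to 0 .
\]
The odds \(p/(1-p)\) therefore converge to \(q^\star/(1-q^\star)\). Because the map \(p\mapsto p/(1-p)\) is a homeomorphism of \((0,1)\) onto \((0,\infty)\), it follows that \(p\to q^\star\). Substituting into \(\pi^\star(a)=q(a)(1-p)/(1-q^\star)\) gives \(\pi^\star(a)\to q(a)\) for every \(a\neq a^\star\). Together these show \(\pi^\star\to\pi_{\textsc{sft}}(\cdot\mid x)\).

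For \(\beta\to0^+\), the argument runs the other way. Since \(p\le 1\), we have \(w/(\beta p)\ge w/\beta\), and therefore
\[
\frac{p}{1-p}\ \ge\ \frac{q^\star}{1-q^\star}\,e^{w/\beta}\ \to\ \infty .
\]
This forces \(p\to1\). All other actions then satisfy \(\pi^\star(a)=q(a)(1-p)/(1-q^\star)\to0\), so \(\pi^\star\) converges to the point mass at \(a^\star\).

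The only real subtlety is that Eq.~\ref{eq:implicit_solution} is implicit, with \(p\) appearing inside the exponent, so one cannot simply let \(\beta\) move with \(p\) held fixed. The bounds \(q^\star<p\le1\) are what make this safe: they squeeze the exponent \(w/(\beta p)\) uniformly in \(\beta\), so neither limit requires continuity or monotonicity of \(p(\beta)\). If one wanted to avoid relying on the fixed point at all, an alternative is to compare objective values directly. As \(\beta\to\infty\), \(\beta D_{\mathrm{KL}}(\pi^\star\|q)\le\mathcal{L}_x(q)+w\log(1/q^\star)\) stays bounded, so the KL term vanishes and Pinsker's inequality gives convergence to \(q\). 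As \(\beta\to0^+\), the \(-w\log\pi(a^\star)\) term dominates. I would use the odds-identity route, however, as it is shorter and fully explicit.
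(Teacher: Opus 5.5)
Your proof is correct, but it follows a different route from the paper's. The paper argues only at the level of the objective. As \(\beta\to\infty\) the KL term ``dominates,'' so the minimizer approaches the minimizer of \(D_{\mathrm{KL}}(\pi\,\|\,\pi_{\textsc{sft}}(\cdot\mid x))\). As \(\beta\to0^{+}\) the term \(-w\log\pi(a^\star)\) dominates and is minimized by the point mass at \(a^\star\). The paper never uses the fixed-point equation Eq.~\ref{eq:implicit_solution}, and it does not justify passing the limit through the \(\arg\min\).

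You instead reduce Eq.~\ref{eq:implicit_solution} to a scalar odds identity and squeeze the exponent using \(q(a^\star)<p\le1\). This makes both limits rigorous without any continuity assumption on \(p(\beta)\). It also gives explicit rates, such as \(p/(1-p)\ge \tfrac{q(a^\star)}{1-q(a^\star)}e^{w/\beta}\). Finally, it yields the sharper fact that the non-annotated actions keep their SFT proportions for every \(\beta\), rather than mass merely being ``shifted.'' The cost is that you inherit the interior-stationarity setup of Proposition~\ref{prop:kl_conservative} (full support, \(|\mathcal{A}|\ge2\)), which you correctly flag.

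The paper's dominance heuristic can itself be made rigorous without the fixed point, essentially as in your alternative:
\begin{itemize}
\item Comparing with \(\pi=q\) gives \(\beta D_{\mathrm{KL}}(\pi^\star\|q)\le w\log(1/q(a^\star))\). Your stated bound carries an extra additive \(w\log(1/q(a^\star))\), which is harmless but unnecessary since \(\mathcal{L}_x(q)\) already equals that quantity.
\item Comparing with the point mass \(\delta_{a^\star}\) gives \(-w\log\pi^\star(a^\star)\le\mathcal{L}_x(\delta_{a^\star})=\beta\log(1/q(a^\star))\to0\). This settles the \(\beta\to0^{+}\) case more cleanly than ``the supervised term dominates.''
\end{itemize}
That variational version extends to objectives without a closed-form stationarity condition. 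Yours gives explicit, quantitative control of \(\pi^\star\).
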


\begin{proof}
When \(\beta\to\infty\), the KL term dominates and the minimizer approaches the unique minimizer of
\[
D_{\mathrm{KL}}(\pi\|\pi_{\textsc{sft}}(\cdot\mid x)),
\]
namely \(\pi_{\textsc{sft}}(\cdot\mid x)\). When \(\beta\to0^{+}\), the supervised term \(-w\log\pi(a^\star)\) dominates. This term is minimized by maximizing \(\pi(a^\star)\), whose limiting solution places all probability mass on \(a^\star\). The proof is complete.
\end{proof}

\begin{remark}
Proposition~\ref{prop:kl_conservative} explains the role of IV-KL training. High-IV labels increase the probability of the annotated action, while zero-IV or negative-IV labels do not move the policy away from the SFT reference. The KL coefficient \(\beta\) controls how aggressively the policy trusts the IV signal.
\end{remark}

\subsection{Provenance-Dependent Intervention Value}
\label{appendix:provenance_theory}

Finally, we formalize how IV can be used to diagnose annotation-source effects. This section is not a causal proof that provenance alone determines IV. Rather, it gives a testable diagnostic: if IV differs across annotation sources even when difficulty trends do not explain the ordering, then the gap is more consistent with annotation-source effects than with problem difficulty alone.

\begin{definition}[Annotation sources]
\label{def:annotation_sources}
Let \(\mathcal{D}_{\mathrm{dial}}\) denote dialogue-derived source-problem annotations, extracted from existing multi-turn tutoring dialogues. Let \(\mathcal{D}_{\mathrm{step}}\) denote step-variant annotations, constructed from local subgoal-level student states for easy and hard variants. We further write
\[
\mathcal{D}_{\mathrm{step}}
=
\mathcal{D}_{\mathrm{easy}}
\cup
\mathcal{D}_{\mathrm{hard}}.
\]
\end{definition}

\begin{definition}[Over-intervention rate]
\label{def:over_intervention_rate}
For a dataset \(\mathcal{D}\) of annotated decision points, define the over-intervention rate as
\[
\mathrm{OI}(\mathcal{D})
=
\frac{
\left|
\{(x,a)\in\mathcal{D}: a\in\mathcal{A}^{+},\ \mathrm{IV}(x,a)\leq0\}
\right|
}{
\left|
\{(x,a)\in\mathcal{D}: a\in\mathcal{A}^{+}\}
\right|
}.
\]
This measures the fraction of prescribed active interventions that provide no positive marginal value over non-intervention.
\end{definition}

\begin{hypothesis}[Provenance-dependent IV gap]
\label{hyp:provenance_gap}
If dialogue-derived annotations over-prescribe intervention relative to local step-variant annotations, then
\begin{equation}
\label{eq:provenance_iv_gap}
\mathbb{E}_{(x,a)\sim\mathcal{D}_{\mathrm{dial}}}
\left[
\mathrm{IV}(x,a)
\right]
<
\mathbb{E}_{(x,a)\sim\mathcal{D}_{\mathrm{step}}}
\left[
\mathrm{IV}(x,a)
\right],
\end{equation}
and
\begin{equation}
\label{eq:provenance_oi_gap}
\mathrm{OI}(\mathcal{D}_{\mathrm{dial}})
>
\mathrm{OI}(\mathcal{D}_{\mathrm{step}}).
\end{equation}
\end{hypothesis}

\begin{proposition}[Difficulty-only explanations imply a monotone difficulty trend]
\label{prop:difficulty_diagnostic}
Suppose that the IV gap between \(\mathcal{D}_{\mathrm{dial}}\) and \(\mathcal{D}_{\mathrm{step}}\) is explained only by problem difficulty. Then the expected IV should vary monotonically with the difficulty ladder from easy variants to source problems to hard variants. Therefore, if the empirical ordering is non-monotone with respect to difficulty, and both easy and hard step-variant annotations have higher IV than dialogue-derived source-problem annotations, then difficulty alone is insufficient to explain the provenance gap.
\end{proposition}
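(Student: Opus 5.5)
The plan is to make the phrase ``explained only by problem difficulty'' precise as a modeling assumption and then argue by contraposition. I will formalize the difficulty-only hypothesis as follows. Every annotated decision point \((x,a)\) carries a difficulty level \(\delta(x)\) taking one of the three ordered values \(\delta_{-}<\delta_0<\delta_{+}\), corresponding to \(\mathcal{D}_{\mathrm{easy}}\), \(\mathcal{D}_{\mathrm{dial}}\) (source problems), and \(\mathcal{D}_{\mathrm{hard}}\). Under this hypothesis, the conditional expectation of IV depends on \((x,a)\) only through \(\delta(x)\):
\[
\mathbb{E}[\mathrm{IV}(x,a)\mid (x,a)\in\mathcal{D}_{v}]=f(\delta_v)
\]
for a single function \(f\) that is monotone (non-decreasing or non-increasing) in difficulty. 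Monotonicity is part of what ``difficulty explains IV'' means here, reflecting the ladder validated in Table~\ref{tab:difficulty_validation}. In particular, provenance (dialogue versus step) contributes no separate term.

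The first claim then follows directly. The three means \(m_{-}=f(\delta_{-})\), \(m_0=f(\delta_0)\), \(m_{+}=f(\delta_{+})\) must satisfy either \(m_{-}\le m_0\le m_{+}\) or \(m_{-}\ge m_0\ge m_{+}\). In either case \(m_0\) lies between the other two, so \(\min(m_{-},m_{+})\le m_0\le\max(m_{-},m_{+})\).

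For the second claim I will argue by contraposition. Suppose both step-variant means exceed the dialogue-derived mean, \(m_{-}>m_0\) and \(m_{+}>m_0\). Then \(m_0<\min(m_{-},m_{+})\), which contradicts the sandwich inequality just derived. So no monotone \(f\) can generate these means, and the difficulty-only hypothesis fails. Equivalently, the observed ordering is non-monotone along the ladder, which gives the ``difficulty alone is insufficient'' conclusion. The same argument can be stated for the OI rates, by replacing \(\mathrm{IV}\) with the indicator \(\mathbf{1}[\mathrm{IV}\le0]\) and reversing the inequality direction.

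The main obstacle is the formalization step, not the algebra. The proposition is only as strong as the assumption that difficulty acts through one monotone function shared across sources. I will state explicitly that this is the content of the hypothesis, and note that the conclusion is a diagnostic, not a causal identification, consistent with the caveat at the start of \S\ref{appendix:provenance_theory}. A secondary subtlety is that the action mix differs across sources. I will handle this either by conditioning on action type (applying the argument per action \(a\in\mathcal{A}^{+}\)) or by absorbing the action mix into \(f\) as part of the difficulty-only assumption.
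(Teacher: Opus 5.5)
Your proposal is correct and takes essentially the same route as the paper: treat expected IV as a function of ladder position alone, infer a monotone trend across easy, source and hard, and note that a dialogue-derived source group lying strictly below both step-variant groups contradicts any monotone ordering. Your version is more careful in two respects: you state monotonicity of $f$ explicitly as part of the hypothesis, whereas the paper's proof moves from ``a function of difficulty alone'' to ``a monotone trend'' without justification, and you flag the action-mix confound; the logical content is otherwise the same.
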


\begin{proof}
Under a difficulty-only explanation, annotation source has no independent effect on IV once difficulty is fixed. Therefore, expected IV should be expressible as a function of difficulty alone. Since the data are arranged as a difficulty ladder consisting of easy variants, source problems, and hard variants, a difficulty-only explanation predicts a monotone trend across these three groups, either increasing or decreasing depending on how difficulty affects student recoverability.

However, if both
\[
\mathbb{E}_{\mathcal{D}_{\mathrm{easy}}}[\mathrm{IV}]
>
\mathbb{E}_{\mathcal{D}_{\mathrm{dial}}}[\mathrm{IV}]
\]
and
\[
\mathbb{E}_{\mathcal{D}_{\mathrm{hard}}}[\mathrm{IV}]
>
\mathbb{E}_{\mathcal{D}_{\mathrm{dial}}}[\mathrm{IV}]
\]
hold, then the source-problem dialogue-derived group lies below both the easier and harder step-variant groups. This pattern is not monotone along the difficulty ladder. Hence, difficulty alone is insufficient to explain the observed IV gap. The remaining difference is more consistent with annotation-source effects, such as deriving labels from existing teacher-led dialogues rather than local subgoal-level decision points. The proof is complete.
\end{proof}

\begin{remark}[Empirical diagnostic]
In our experiments, the IV ordering follows
\[
\mathrm{IV}_{\mathrm{hard}}
>
\mathrm{IV}_{\mathrm{easy}}
>
\mathrm{IV}_{\mathrm{dial}},
\]
indicating that both easy and hard step-variant annotations have higher marginal intervention value than dialogue-derived source-problem annotations. This supports the interpretation that the low IV of dialogue-derived labels is not explained by difficulty alone, and motivates IV-calibrated policy learning.
\end{remark}

\section{Detailed Experiments}
\label{app:experimental_details}

\subsection{Policy Model Training}
\label{app:policy_training}

We train the \textsc{DICE} policy as an action-selection model. Each example encodes the tutoring state, and the model predicts the next pedagogical action. To avoid leakage, train, validation, and test splits are created by grouping examples by source problem ID, so all states derived from the same source problem and its variants remain in the same split. The resulting model configuration and training hyperparameters are summarized in Tables~\ref{tab:policy_training_config} and~\ref{tab:policy_lora_hparams}.

\begin{table}[htbp]
\centering
\begin{minipage}[t]{0.48\textwidth}
\centering
\scriptsize
\caption{\footnotesize Policy model configuration.}
\label{tab:policy_training_config}
\renewcommand{\arraystretch}{1.12}
\begin{tabularx}{\linewidth}{>{\raggedright\arraybackslash}p{0.38\linewidth} >{\raggedright\arraybackslash}X}
\toprule
\rowcolor{gray!25}
\textbf{Component} & \textbf{Setting} \\
\midrule
Backbone model & Qwen-2.5-7B-Instruct \\
Training objective & Causal language modeling over action outputs \\
Policy output format & Action-only prediction \\
Action space & \(\{\)accept, arithmetic repair, handle question, refocus, conceptual hint, justify\(\}\) \\
Input context & Problem, reference analysis, answer, current subgoal, teacher question, dialogue history, and student response \\
Split strategy & Grouped by source problem ID \\
Split ratio & 80/10/10 train, validation, test \\
Maximum sequence length & 2048 tokens \\
Seed & 42 \\
\bottomrule
\end{tabularx}
\end{minipage}
\hfill
\begin{minipage}[t]{0.48\textwidth}
\centering
\scriptsize
\caption{\footnotesize LoRA and optimization hyperparameters.}
\label{tab:policy_lora_hparams}
\renewcommand{\arraystretch}{1.12}
\begin{tabularx}{\linewidth}{>{\raggedright\arraybackslash}p{0.48\linewidth} >{\raggedright\arraybackslash}X}
\toprule
\rowcolor{gray!25}
\textbf{Hyperparameter} & \textbf{Value} \\
\midrule
LoRA rank \(r\) & 16 \\
LoRA \(\alpha\) & 32 \\
LoRA dropout & 0.05 \\
Target modules & \(q\), \(k\), \(v\), \(o\), up, down, gate projections \\
Training epochs & 3 \\
Learning rate & \(2\times10^{-4}\) \\
Weight decay & 0.0 \\
Warmup ratio & 0.03 \\
Per-device train batch size & 2 \\
Per-device eval batch size & 2 \\
Gradient accumulation steps & 8 \\
Precision & BF16 \\
Quantization & 4-bit NF4 with double quantization \\
Gradient checkpointing & Enabled \\
KL coefficient \(\beta\) & 0.05 \\
\bottomrule
\end{tabularx}
\end{minipage}
\end{table}

\subsection{\textsc{DICE} Training and Inference}
\label{app:dice_algorithm}

The full \textsc{DICE} procedure is summarized in Algorithm~\ref{alg:dice}. Starting from turn-level action annotations, we first train a supervised action policy, estimate Intervention Value through controlled rollouts, and then train an IV-calibrated policy with KL regularization to the supervised policy. At inference time, the policy selects a discrete pedagogical action before response generation, allowing the tutor to explicitly decide whether intervention is necessary.

\begin{algorithm}[htbp]
\caption{\textsc{DICE}: IV-Calibrated Tutoring Policy}
\label{alg:dice}
\small
\begin{algorithmic}[1]
\Require Annotated tutoring states \(\mathcal{D}=\{(x_i,a_i)\}_{i=1}^{N}\); action set \(\mathcal{A}\); non-intervention action \(a_0\); student simulator \(p_{\mathrm{student}}\); judge model \(\mathcal{J}\); KL weight \(\beta\)
\Ensure Calibrated tutoring policy \(\pi_{\theta}\)

\vspace{0.3em}
\Statex \textbf{/* Stage 1: Supervised action policy */}
\State Train \(\pi_{\mathrm{SFT}}\) on \(\mathcal{D}\) using standard action prediction.

\vspace{0.3em}
\Statex \textbf{/* Stage 2: Intervention Value estimation */}
\For{each example \((x_i,a_i)\in\mathcal{D}\)}
    \If{\(a_i = a_0\)}
        \State Set \(w_i \leftarrow 1\)
    \Else
        \State Render controlled utterances \(T(a_i,x_i)\) and \(T(a_0,x_i)\)
        \State Roll out student responses under both actions using \(p_{\mathrm{student}}\)
        \State Score rollout progress with judge \(\mathcal{J}\)
        \State Estimate \(\widehat{\mathrm{IV}}(x_i,a_i)=\hat{\mu}(x_i,a_i)-\hat{\mu}(x_i,a_0)\)
        \State Set \(w_i \leftarrow \max(\widehat{\mathrm{IV}}(x_i,a_i),0)\)
    \EndIf
\EndFor

\vspace{0.3em}
\Statex \textbf{/* Stage 3: IV-calibrated policy optimization */}
\State Train \(\pi_{\theta}\) with IV-weighted supervision:
\[
\mathcal{L}_{\textsc{DICE}}
=
-\frac{1}{\sum_i w_i}\sum_{i=1}^{N}w_i\log\pi_{\theta}(a_i\mid x_i)
+
\beta D_{\mathrm{KL}}\!\left(\pi_{\theta}(\cdot\mid x_i)\,\|\,\pi_{\mathrm{SFT}}(\cdot\mid x_i)\right).
\]

\vspace{0.3em}
\Statex \textbf{/* Stage 4: Action-conditioned tutoring */}
\For{each tutoring turn \(t\)}
    \State Observe tutoring state \(x_t\)
    \State Select action \(\hat{a}_t\) using MAP, Cost, or Threshold inference
    \If{\(\hat{a}_t = a_0\)}
        \State Accept and advance without generating active guidance
    \Else
        \State Generate teacher utterance \(u_t\sim p_{\phi}(\cdot\mid x_t,\hat{a}_t)\)
    \EndIf
\EndFor
\State \Return \(\pi_{\theta}\)
\end{algorithmic}
\end{algorithm}

\subsection{End-to-End Tutoring Evaluation}
\label{app:end_to_end_eval}

We evaluate end-to-end tutoring in two settings: a single-problem setting, where the tutor guides the student on one variant at a time, and a session-level setting, where the tutor guides the same student through a problem triplet \((q^{-},q,q^{+})\) under a shared turn budget. The single-problem setting uses a maximum of 8 teacher-student turns per problem, while the session-level setting uses a total budget of 20 turns across the full triplet. The inference configurations for both settings are summarized in Tables~\ref{tab:single_problem_inference_config} and~\ref{tab:session_inference_config}.

\begin{table}[htbp]
\centering
\begin{minipage}[t]{0.48\textwidth}
\centering
\scriptsize
\caption{\footnotesize Single-problem tutoring inference setup.}
\label{tab:single_problem_inference_config}
\renewcommand{\arraystretch}{1.12}
\begin{tabularx}{\linewidth}{>{\raggedright\arraybackslash}p{0.43\linewidth} >{\raggedright\arraybackslash}X}
\toprule
\rowcolor{gray!25}
\textbf{Component} & \textbf{Setting} \\
\midrule
Evaluation unit & One problem variant at a time \\
Variants & \(q^{-}\), \(q\), and \(q^{+}\) evaluated independently \\
Maximum turns & 8 teacher-student turns \\
Teacher backbone & Qwen-2.5-7B-Instruct \\
Student simulator & Qwen-2.5-1.5B-Instruct \\
Policy backbone & Qwen-2.5-7B-Instruct with LoRA adapter \\
Policy scoring & Constrained action-label likelihood \\
Score normalization & Mean token log-likelihood \\
Teacher decoding & Temperature 0.2, top-\(p=0.95\) \\
Student decoding & Temperature 0.6, top-\(p=0.95\) \\
Teacher max tokens & 256 \\
Student max tokens & 256 \\
Context length & 2048 tokens \\
Total Problems & 750 \\
\bottomrule
\end{tabularx}
\end{minipage}
\hfill
\begin{minipage}[t]{0.48\textwidth}
\centering
\scriptsize
\caption{\footnotesize Session-level tutoring inference setup.}
\label{tab:session_inference_config}
\renewcommand{\arraystretch}{1.12}
\begin{tabularx}{\linewidth}{>{\raggedright\arraybackslash}p{0.43\linewidth} >{\raggedright\arraybackslash}X}
\toprule
\rowcolor{gray!25}
\textbf{Component} & \textbf{Setting} \\
\midrule
Evaluation unit & One full problem triplet \((q^{-},q,q^{+})\) \\
Progression order & \(q^{-}\rightarrow q\rightarrow q^{+}\) \\
Maximum turns & 20 total turns per session \\
Teacher backbone & Qwen-2.5-7B-Instruct \\
Student simulator & Qwen-2.5-1.5B-Instruct \\
Policy backbone & Qwen-2.5-7B-Instruct with LoRA adapter \\
Policy scoring & Constrained action-label likelihood \\
Score normalization & Mean token log-likelihood \\
Teacher decoding & Temperature 0.2, top-\(p=0.95\) \\
Student decoding & Temperature 0.6, top-\(p=0.95\) \\
Teacher max tokens & 256 \\
Student max tokens & 256 \\
Context length & 4096 tokens \\
Total Problems & 750 \\
\bottomrule
\end{tabularx}
\end{minipage}
\end{table}

\subsection{Evaluation Metrics}
\label{app:evaluation_metrics}

For turn-level policy prediction, we report action accuracy and macro-F1 over the six pedagogical actions. We also compute binary intervention metrics by collapsing the action space into non-intervention \(a_0\) and active intervention \(\mathcal{A}^{+}=\mathcal{A}\setminus\{a_0\}\). For tutoring, we follow interactive evaluation protocols used in prior tutoring work such as SocraticLM~\cite{10.5555/3737916.3740637} and MathDial~\cite{macina-etal-2023-mathdial}, and additionally track intervention-specific failures.

Let \(c_t\in\{0,1\}\) denote whether the student's latest response is correct at turn \(t\), and let \(\hat{a}_t\) be the tutor's selected action. We define over-intervention as intervening when the student is already correct:
\[
    \mathrm{OverInt}
    =
    \frac{1}{T}
    \sum_{t=1}^{T}
    \mathbb{I}\!\left[c_t=1 \wedge \hat{a}_t\in\mathcal{A}^{+}\right].
\]
We define false acceptance as accepting an incorrect student response:
\[
    \mathrm{FalseAcc}
    =
    \frac{1}{T}
    \sum_{t=1}^{T}
    \mathbb{I}\!\left[c_t=0 \wedge \hat{a}_t=a_0\right].
\]
For session-level evaluation, each session contains the triplet \((q^{-},q,q^{+})\) under a shared turn budget. We report all-three success, variant-level success, average variants solved per session, average turns used, over-intervention, and unsafe final-answer acceptance. Unsafe final-answer acceptance occurs when the tutor accepts a final-answer-like student response that is incorrect.

\noindent\textbf{Acceptance verifier and final-answer guard.}
The acceptance verifier and final-answer guard are part of the session-level \textsc{DICE} framework, not a post-hoc evaluation correction. In single-problem tutoring, we observed that learning a selective policy substantially reduces unnecessary interventions, but does not by itself eliminate a different failure mode: the tutor may still accept a response that only appears correct. This is expected because the teacher is a standard instruction-tuned model rather than an oracle or frontier-grade mathematical verifier, and its pretrained knowledge alone is not always sufficient to decide when a student response should be trusted.

To address this, \textsc{DICE} adds a safety check only when the policy selects the non-intervention action \(a_0=\mathrm{accept\_and\_advance}\). The verifier receives the problem, reference answer, solution context, dialogue history, and latest student response, and predicts whether the response is \(\mathrm{CORRECT}\), \(\mathrm{INCORRECT}\), or \(\mathrm{UNCLEAR}\). If the response is not verified as \(\mathrm{CORRECT}\), \textsc{DICE} blocks the accept decision and replaces it with \(\mathrm{ask\_for\_justification}\). We further add a final-answer guard for the most severe case: if a response appears to be a final answer but fails exact answer matching, it is treated as \(\mathrm{UNCLEAR}\) even when the verifier is overly permissive. Together, these modules act as safety nets around accept decisions, preventing the tutor from prematurely validating incorrect student answers while preserving the policy's ability to intervene less often.

\subsection{Compute Cost}
\label{app:compute_cost}

Policy training was performed on a single A100 GPU with 80GB memory. Training the default Qwen-2.5-7B policy for three epochs took approximately \(4\) hours; across all policy training runs, the average training time was \(3.55\) hours. Additional training hyperparameters are reported in Table~\ref{tab:policy_training_config}.

Inference cost varies substantially with GPU availability and model-loading configuration. For the subset used in our reported tutoring experiments, we used either a \(4\times\)A5000 setup with 24GB memory per GPU or a single A100 GPU with 80GB memory. GPU memory utilization was set close to the maximum feasible level for each run, approximately 80GB on the A100 setup. Each end-to-end tutoring inference run took approximately \(2.5\) hours on average. 

\noindent\textbf{Inference efficiency.}
At inference time, \textsc{DICE} adds a decision-layer overhead: each turn requires a policy call, and accept decisions may additionally trigger the verifier. However, this overhead is partly offset by shorter tutoring trajectories, since calibrated action selection reduces unnecessary teacher-student turns. This is reflected in the session results, where \textsc{DICE-Cost} achieves the highest all-three success rate while using fewer turns than all response-only tutoring baselines. Thus, \textsc{DICE} can improve interaction-time efficiency when the cost of extra policy/verifier calls is outweighed by fewer expensive tutor generations.

\section{Detailed Results and Discussion}
\label{app:det_results}

\subsection{Policy Results}
\label{app:policy_results}

Table~\ref{tab:policy_results} reports the full turn-level policy results across all backbones and training setups, and Fig.~\ref{fig:policy_model_summary} provides a compact summary of the main trend. Two findings are consistent. First, pretrained instruction-tuned LLMs perform poorly on tutor-action prediction, indicating that pedagogical decision making is not recovered from general instruction-following alone. Second, action-only supervision is more effective than jointly generating diagnosis and action, and IV-calibrated training further improves both action macro-F1 and over-intervention behavior. Among all variants, Qwen-2.5-7B-Instruct with IV-calibrated action-only training performs best overall, achieving the highest action macro-F1 and the lowest over-intervention rate. The smaller Qwen-2.5-3B model is already competitive after supervised and IV-calibrated training, suggesting that the action-selection task is learnable with compact models, but the 7B policy provides the strongest overall trade-off and is therefore used in the end-to-end tutoring experiments.

\noindent\textbf{Effect of annotation source.}
We also analyze the best policy separately on \(\mathcal{D}_{\mathrm{dial}}\) and \(\mathcal{D}_{\mathrm{step}}\). The policy performs substantially better on \(\mathcal{D}_{\mathrm{step}}\) than on \(\mathcal{D}_{\mathrm{dial}}\), improving from \(0.743\) to \(0.859\) action accuracy and from \(0.426\) to \(0.651\) macro-F1. This gap suggests that \(\mathcal{D}_{\mathrm{step}}\) provides cleaner, more localized supervision, while \(\mathcal{D}_{\mathrm{dial}}\) is harder because the correct action depends on longer conversational context and teacher-led trajectory history. Intervention calibration is also sharper on \(\mathcal{D}_{\mathrm{step}}\), with over-intervention dropping from \(0.036\) to \(0.001\) and under-intervention from \(0.049\) to \(0.012\), showing that the annotation source affects both action prediction and the decision of whether intervention is necessary.

\begin{table}[htbp]
\centering
\scriptsize
\caption{\footnotesize Turn-level policy prediction results on the \textsc{DICE-Bench} test split. We report action accuracy, action macro-F1, over-intervention rate, and action macro-F1 separately on the easy variant \(q^{-}\), source problem \(q\), and hard variant \(q^{+}\). Higher is better for accuracy and F1; lower is better for over-intervention. The best value in each column is bolded and the second-best is underlined. The highlighted row indicates the best overall policy by action macro-F1. \textit{DA} means we trained the model to generate Diagnosis and then Action to be taken, \textit{A} means we generated only the Action for the given tutor state.}
\label{tab:policy_results}
\renewcommand{\arraystretch}{1.13}
\setlength{\tabcolsep}{4pt}
\begin{tabular}{llrrrrrr}
\toprule
\rowcolor{gray!25}
\textbf{Model} &
\textbf{Training setup} &
\textbf{Act. Acc.} &
\textbf{Act. F1} &
\textbf{OverInt.} \(\downarrow\) &
\textbf{\(q^{-}\) F1} &
\textbf{\(q\) F1} &
\textbf{\(q^{+}\) F1} \\
\midrule

\multirow{4}{*}{Llama-3.1-8B Ins.}
& Pretrained
& 3.42 & 3.32 & 38.42 & 2.26 & 4.05 & 3.43 \\
& SFT (\textit{DA})
& 74.54 & 57.64 & 5.82 & 60.07 & 43.50 & 60.61 \\
& SFT (\textit{A})
& 76.78 & 61.83 & 4.64 & 64.51 & 40.02 & 68.10 \\
& IV calibrated SFT (\textit{A})
& \underline{84.65} & \underline{71.43} & 2.79 & \underline{74.54} & \underline{49.75} & \textbf{79.67} \\

\midrule

\multirow{4}{*}{Qwen-2.5-3B Ins.}
& Pretrained
& 8.99 & 11.25 & 38.47 & 6.78 & 13.13 & 11.94 \\
& SFT (\textit{DA})
& 33.28 & 23.20 & 17.16 & 27.14 & 18.56 & 24.92 \\
& SFT (\textit{A})
& 72.83 & 60.22 & 10.51 & 69.41 & 35.83 & 68.71 \\
& IV calibrated SFT (\textit{A})
& 79.47 & 65.17 & 3.18 & 65.82 & 49.74 & 67.30 \\

\midrule

\multirow{4}{*}{Qwen-2.5-7B Ins.}
& Pretrained
& 14.66 & 10.15 & 26.78 & 7.62 & 12.93 & 7.32 \\
& SFT (\textit{DA})
& 57.87 & 53.70 & 7.14 & 60.10 & 29.55 & 64.95 \\
& SFT (\textit{A})
& 76.39 & 62.74 & \underline{1.76} & 63.65 & 46.81 & 62.92 \\
\rowcolor{oursblue}
& IV calibrated SFT (\textit{A})
& \textbf{84.95} & \textbf{72.13} & \textbf{1.37} & \textbf{75.08} & \textbf{52.26} & \underline{77.17} \\
\bottomrule
\end{tabular}
\end{table}

\subsection{Student Capability Analysis}
\label{app:student_capability}

A tutoring evaluation is meaningful only if the student model is neither too weak to benefit from guidance nor already strong enough to solve most problems independently. We therefore evaluate three Qwen-2.5-Instruct student backbones, 0.5B, 1.5B, and 3B, in single-turn question-answering settings before running interactive tutoring. Each model is evaluated on \(6{,}000\) problems, consisting of \(2{,}000\) easy variants \(q^{-}\), \(2{,}000\) source problems \(q\), and \(2{,}000\) hard variants \(q^{+}\). We test zero-shot answering, chain-of-thought prompting, and chain-of-thought prompting with gold subgoal steps.

\begin{table}[htbp]
\centering
\scriptsize
\caption{\footnotesize Student-only question-answering accuracy on \textsc{DICE-Bench}. Each model is evaluated on \(6{,}000\) problems, with \(2{,}000\) examples each from \(q^{-}\), \(q\), and \(q^{+}\). These results measure the intrinsic problem-solving capability of the student model without tutor interaction.}
\label{tab:student_capability}
\renewcommand{\arraystretch}{1.12}
\setlength{\tabcolsep}{5pt}
\begin{tabular}{llrrrr}
\toprule
\rowcolor{gray!25}
\textbf{Student Model} &
\textbf{Prompting Setup} &
\textbf{Overall Acc.} &
\textbf{\(q^{-}\) Acc.} &
\textbf{\(q\) Acc.} &
\textbf{\(q^{+}\) Acc.} \\
\midrule

\multirow{3}{*}{Qwen-2.5-0.5B Ins.}
& Zero-shot       & 10.25 & 19.90 & 7.70  & 3.15 \\
& CoT            & 10.92 & 20.75 & 9.15  & 2.85 \\
& CoT + Steps    & 13.35 & 23.95 & 9.10  & 7.00 \\

\midrule

\multirow{3}{*}{Qwen-2.5-1.5B Ins.}
& Zero-shot       & 20.73 & 38.25 & 17.00 & 6.95 \\
& CoT            & 21.22 & 39.30 & 17.75 & 6.60 \\
& CoT + Steps    & 24.38 & 43.20 & 18.70 & 11.25 \\

\midrule

\multirow{3}{*}{Qwen-2.5-3B Ins.}
& Zero-shot       & \textbf{32.15} & \textbf{56.85} & \textbf{30.05} & 9.55 \\
& CoT            & 30.42 & 53.70 & 28.20 & 9.35 \\
& CoT + Steps    & 30.55 & 51.50 & 26.50 & \textbf{13.65} \\

\bottomrule
\end{tabular}
\end{table}

Table~\ref{tab:student_capability} shows a clear capability gradient across student sizes. The 0.5B model solves too few problems independently, especially on \(q\) and \(q^{+}\), making it difficult to distinguish poor tutoring from an intrinsically weak student simulator. The 3B model is substantially stronger, particularly on easy and source problems, which leaves less room for tutoring to produce measurable gains. We therefore use Qwen-2.5-1.5B-Instruct as the fixed student simulator in the main experiments: it can solve a meaningful fraction of problems on its own, but still leaves substantial headroom for adaptive tutoring to improve performance, especially on harder variants.

\subsection{DICE Inference Hyperparameter Selection}
\label{app:dice_hparam_sweep}

We tune the inference hyperparameters for \textsc{DICE} on a held-out validation subset of \(150\) single-problem tutoring episodes, with \(50\) examples each from \(q^{-}\), \(q\), and \(q^{+}\). Each run treats one problem variant as one tutoring episode. We sweep the action-selection mode, teacher generator, and the corresponding inference hyperparameter: \(\lambda\) for \textsc{DICE-Cost} and \(\tau\) for \textsc{DICE-Threshold}. Model selection uses a composite validation score that rewards final-answer accuracy and penalizes unsafe or excessive tutoring behavior:
\[
\mathrm{Score}
=
\mathrm{Acc}
-
0.50\,\mathrm{FalseAcc}
-
0.50\,\mathrm{OverInt}
-
0.05\,\mathrm{Int},
\]
where \(\mathrm{FalseAcc}\) is the false-acceptance rate, \(\mathrm{OverInt}\) is the over-intervention rate, and \(\mathrm{Int}\) is the average number of active interventions per problem. We also report average successful turns as an efficiency diagnostic, but it is not used in the composite score. This score is used only for validation-time configuration selection, not as the main evaluation metric. Table~\ref{tab:dice_hparam_sweep} reports the full sweep.

\begin{table*}[htbp]
\centering
\tiny
\caption{\footnotesize Validation sweep for \textsc{DICE} inference hyperparameters on \(150\) single-problem tutoring episodes. We include \(50\) examples each from \(q^{-}\), \(q\), and \(q^{+}\). \(\lambda\) is the intervention cost for \textsc{DICE-Cost}; \(\tau\) is the intervention threshold for \textsc{DICE-Threshold}. Turns-S denotes average turns among successfully solved problems; Int./Prob. denotes average active interventions per problem. Higher is better for accuracy (0-1) and score; lower is better for turns, interventions, over-intervention (0-1), and \abv{facc} (0-1).}
\label{tab:dice_hparam_sweep}
\renewcommand{\arraystretch}{1.06}
\setlength{\tabcolsep}{2.4pt}
\begin{tabularx}{\textwidth}{
    >{\raggedleft\arraybackslash}p{0.035\textwidth}
    >{\raggedright\arraybackslash}p{0.165\textwidth}
    >{\raggedright\arraybackslash}p{0.165\textwidth}
    >{\raggedright\arraybackslash}p{0.080\textwidth}
    >{\raggedleft\arraybackslash}p{0.060\textwidth}
    >{\raggedleft\arraybackslash}p{0.070\textwidth}
    >{\raggedleft\arraybackslash}p{0.080\textwidth}
    >{\raggedleft\arraybackslash}p{0.070\textwidth}
    >{\raggedleft\arraybackslash}p{0.080\textwidth}
    >{\raggedleft\arraybackslash}p{0.060\textwidth}
}
\toprule
\rowcolor{gray!25}
\textbf{Rank} &
\textbf{Mode} &
\textbf{Teacher} &
\textbf{HP} &
\textbf{Acc.} \(\uparrow\) &
\textbf{Turns-S} \(\downarrow\) &
\textbf{Int./Prob.} \(\downarrow\) &
\textbf{OverInt.} \(\downarrow\) &
\textbf{FalseAcc.} \(\downarrow\) &
\textbf{Score} \(\uparrow\) \\
\midrule

1  & \textsc{DICE-Cost}      & Qwen-2.5-7B & \(\lambda=0.25\) & 0.865 & 1.617 & 0.885 & 0.000 & 0.277 & 0.682 \\
2  & \textsc{DICE-Threshold} & Qwen-2.5-7B  & \(\tau=0.8\)     & 0.851 & 1.595 & 1.068 & 0.034 & 0.297 & 0.632 \\
3  & \textsc{DICE-Cost}      & Qwen-2.5-7B & \(\lambda=1.0\)  & 0.815 & 1.597 & 1.192 & 0.000 & 0.281 & 0.615 \\
4  & \textsc{DICE-MAP}       & Qwen-2.5-7B & --               & 0.836 & 1.746 & 1.027 & 0.007 & 0.349 & 0.606 \\
5  & \textsc{DICE-Cost}      & SocraticLM  & \(\lambda=0.5\)  & 0.812 & 1.760 & 0.758 & 0.000 & 0.342 & 0.603 \\
6  & \textsc{DICE-Cost}      & SocraticLM  & \(\lambda=2.0\)  & 0.805 & 1.842 & 0.705 & 0.000 & 0.389 & 0.576 \\
7  & \textsc{DICE-Cost}      & SocraticLM  & \(\lambda=3.0\)  & 0.800 & 1.692 & 0.620 & 0.000 & 0.393 & 0.572 \\
8  & \textsc{DICE-Threshold} & SocraticLM & \(\tau=0.7\)     & 0.800 & 1.658 & 1.240 & 0.000 & 0.333 & 0.571 \\
9  & \textsc{DICE-Cost}      & SocraticLM  & \(\lambda=1.5\)  & 0.805 & 1.900 & 0.725 & 0.000 & 0.409 & 0.564 \\
10 & \textsc{DICE-Cost}      & Qwen-2.5-7B & \(\lambda=0.75\) & 0.795 & 1.638 & 1.164 & 0.000 & 0.349 & 0.562 \\
11 & \textsc{DICE-MAP}       & SocraticLM  & --               & 0.797 & 1.729 & 1.034 & 0.027 & 0.345 & 0.560 \\
12 & \textsc{DICE-Cost}      & Qwen-2.5-7B & \(\lambda=0.5\)  & 0.786 & 1.623 & 1.214 & 0.007 & 0.331 & 0.557 \\
13 & \textsc{DICE-Cost}      & SocraticLM  & \(\lambda=0.0\)  & 0.811 & 1.883 & 1.176 & 0.014 & 0.392 & 0.549 \\
14 & \textsc{DICE-Cost}      & SocraticLM  & \(\lambda=1.0\)  & 0.760 & 1.605 & 0.840 & 0.000 & 0.347 & 0.545 \\
15 & \textsc{DICE-Cost}      & Qwen-2.5-7B & \(\lambda=2.0\)  & 0.785 & 1.752 & 1.396 & 0.000 & 0.354 & 0.538 \\
16 & \textsc{DICE-Cost}      & SocraticLM  & \(\lambda=0.25\) & 0.793 & 1.849 & 1.047 & 0.013 & 0.400 & 0.534 \\
17 & \textsc{DICE-Threshold} & SocraticLM  & \(\tau=0.8\)     & 0.767 & 1.591 & 0.993 & 0.007 & 0.373 & 0.527 \\
18 & \textsc{DICE-Threshold} & Qwen-2.5-7B & \(\tau=0.7\)     & 0.812 & 2.000 & 1.423 & 0.047 & 0.389 & 0.523 \\
19 & \textsc{DICE-Cost}      & Qwen-2.5-7B & \(\lambda=1.5\)  & 0.761 & 1.593 & 1.528 & 0.000 & 0.338 & 0.515 \\
20 & \textsc{DICE-Cost}      & Qwen-2.5-7B & \(\lambda=3.0\)  & 0.738 & 1.382 & 1.295 & 0.000 & 0.322 & 0.512 \\
21 & \textsc{DICE-Cost}      & Qwen-2.5-7B & \(\lambda=0.0\)  & 0.764 & 1.566 & 1.392 & 0.020 & 0.345 & 0.511 \\
22 & \textsc{DICE-Cost}      & SocraticLM  & \(\lambda=0.75\) & 0.758 & 1.673 & 0.893 & 0.000 & 0.416 & 0.506 \\
23 & \textsc{DICE-Threshold} & SocraticLM  & \(\tau=0.6\)     & 0.743 & 2.055 & 2.568 & 0.189 & 0.331 & 0.355 \\
24 & \textsc{DICE-Threshold} & Qwen-2.5-7B & \(\tau=0.6\)     & 0.707 & 1.915 & 2.533 & 0.233 & 0.353 & 0.287 \\
25 & \textsc{DICE-Threshold} & SocraticLM  & \(\tau=0.4\)     & 0.140 & 3.095 & 7.313 & 0.753 & 0.373 & -0.789 \\
26 & \textsc{DICE-Threshold} & SocraticLM  & \(\tau=0.3\)     & 0.153 & 3.609 & 7.327 & 0.747 & 0.427 & -0.800 \\
27 & \textsc{DICE-Threshold} & Qwen-2.5-7B & \(\tau=0.5\)     & 0.120 & 3.889 & 7.500 & 0.707 & 0.427 & -0.822 \\
28 & \textsc{DICE-Threshold} & Qwen-2.5-7B & \(\tau=0.3\)     & 0.113 & 3.824 & 7.527 & 0.740 & 0.393 & -0.830 \\
29 & \textsc{DICE-Threshold} & Qwen-2.5-7B & \(\tau=0.4\)     & 0.087 & 3.385 & 7.600 & 0.700 & 0.393 & -0.840 \\
30 & \textsc{DICE-Threshold} & SocraticLM  & \(\tau=0.5\)     & 0.100 & 3.733 & 7.567 & 0.740 & 0.493 & -0.895 \\

\bottomrule
\end{tabularx}
\end{table*}

The sweep shows that cost-sensitive inference is more stable than thresholding across the tested range. The best configuration is \textsc{DICE-Cost} with Qwen-2.5-7B-Instruct and \(\lambda=0.25\), which obtains the highest validation score by combining high accuracy with low turn count and zero over-intervention. Threshold-based inference can also perform well when \(\tau\) is large, but low thresholds cause the tutor to intervene too often, increasing turns and sharply reducing accuracy. Based on this sweep, we select the strongest cost-sensitive, threshold, and MAP configurations for the larger single-problem and session-level evaluations.

\subsection{\textsc{DICE} Acceptance Verification and Final-Answer Guard}
\label{app:verifier_guard_discussion}

\noindent\textbf{Making accept decisions explicit.}
Every tutoring system must decide when a student's response should be accepted. Response-only tutors make this decision implicitly through generated language, for example by saying ``Correct,'' ``Good,'' or ``Now move on.'' In contrast, \textsc{DICE} makes this decision explicit through the non-intervention action \(a_0=\mathrm{accept\_and\_advance}\). This explicit action interface is important: once acceptance is represented as a discrete decision, it can be checked before the session advances.

\noindent\textbf{Why verification is needed.}
The teacher used in our experiments is not an oracle, a mathematical judge, or a frontier-scale verifier. It is a standard instruction-tuned model used to simulate a realistic tutoring system. Such models can generate useful Socratic guidance, but they cannot always reliably decide whether a student's latest response is mathematically correct, especially when the response is long, partially correct, superficially fluent, or phrased as a final answer. This matters in session-level tutoring because an incorrect accept decision can prematurely move the student from \(q^{-}\) to \(q\), or from \(q\) to \(q^{+}\), under a shared turn budget.

\noindent\textbf{Acceptance verifier.}
For session-level \textsc{DICE}, the verifier is invoked only when the policy selects \(a_0=\mathrm{accept\_and\_advance}\). The verifier receives the problem, reference answer, solution context, dialogue history, and latest student response, and predicts whether the response is \(\mathrm{CORRECT}\), \(\mathrm{INCORRECT}\), or \(\mathrm{UNCLEAR}\). If the response is not verified as \(\mathrm{CORRECT}\), the accept decision is blocked and replaced with \(\mathrm{ask\_for\_justification}\). Thus, the verifier does not control every tutor action; it audits the single action whose failure mode is most consequential: advancing after an incorrect response.

\noindent\textbf{Final-answer guard.}
We further define unsafe final-answer acceptance as a stricter subset of \abv{facc}: the tutor accepts an incorrect response that appears to be the student's final answer. This is more severe than accepting an incomplete intermediate step because it validates the student's final conclusion. The final-answer guard therefore applies a conservative rule: if the latest response appears final-answer-like but fails exact answer matching, it is treated as \(\mathrm{UNCLEAR}\), even when the verifier is overly permissive. This prevents the tutor from validating wrong final answers.

\noindent\textbf{Interpreting the baseline trade-off.}
Unsafe final-answer acceptance should not be interpreted in isolation. A response-only tutor can obtain a low unsafe-acceptance rate by rarely accepting at all, continuing to question the student even when the student is already correct. This explains why Socratic prompting and MathDial can show relatively low unsafe acceptance while also exhibiting high over-intervention and high turn usage. SocraticLM shows the opposite trade-off: it acknowledges progress more readily, which reduces turns, but increases the risk of accepting incorrect responses. These baselines expose two competing tutoring failures: over-intervening when the student is correct and falsely accepting when the student is wrong.

\noindent\textbf{Role in \textsc{DICE}.}
The verifier and guard are therefore not post-hoc corrections to evaluation labels. They are part of the \textsc{DICE} execution layer enabled by explicit action selection. The policy reduces unnecessary intervention by distinguishing \(a_0\) from active actions \(\mathcal{A}^{+}\), while the verifier and guard make accept decisions safer when \(a_0\) is selected. This supports the central goal of \textsc{DICE}: not merely to speak less, but to intervene selectively while avoiding premature validation of incorrect student answers.

\section{Additional Qualitative Examples}
\label{sec:addon_qual}

\subsection{Qualitative Dialogue Rollouts}
\label{app:qual_dialogue_rollouts}

Table~\ref{tab:qual_dialogue_rollouts} shows two representative dialogue rollouts comparing \textsc{SocraticLM} and \textsc{DICE}. The first case illustrates repeated over-intervention after the student has reached the correct solution, while the second shows false acceptance, where the baseline validates an incorrect answer and continues along the wrong trajectory. In both cases, \textsc{DICE} makes an explicit action decision before generation, enabling targeted correction or acceptance.

\begin{table}[htbp]
\centering
\scriptsize
\caption{\footnotesize Qualitative examples of single-problem dialogue rollouts comparing \textsc{SocraticLM} and \textsc{DICE}. To capture realistic simulator variability, student generations were not seeded, while the student model was kept fixed across systems. For readability, we remove repetitive/noisy turns and retain only the essential exchanges needed to illustrate tutor behavior. The first example shows repeated over-intervention, while the second shows false acceptance of an incorrect student answer. Colored tags indicate the tutor action or failure mode.}
\label{tab:qual_dialogue_rollouts}
\renewcommand{\arraystretch}{1.18}
\setlength{\tabcolsep}{4pt}
\begin{tabularx}{\textwidth}{p{0.15\textwidth}p{0.12\textwidth}Yp{0.18\textwidth}}
\toprule
\rowcolor{gray!25}
\textbf{Case} & \textbf{Tutor} & \textbf{Problem and dialogue rollout} & \textbf{Behavior} \\
\midrule

\multirow{2}{=}{\textbf{Over-intervention}
\vspace{1mm}

\textbf{\textit{Problem}:} Mary earns \(\$46\) for cleaning one home. She earned \(\$300\). How many whole homes could she clean and how much money would be left over?

\textbf{\textit{Answer}:} \(6\) homes, \(\$24\) left.}
&
\actionbadge{modelbg}{modelfg}{SocraticLM}
&
\textbf{S:} \(300\div46=6.52\), so Mary cleaned \(6\) homes. The leftover is \(300-(46\times6)=\$8\).

\textbf{T:} \actionbadge{badbg}{badfg}{False acceptance} That's correct! Mary cleaned \(6\) whole homes and had \(\$8\) left over.

--- \textit{Student acknowledges, dialogue continues for 5 turns}.

\textbf{T:} There seems to be a small error. Let's try again: \(300-(46\times6)=\$84-\$276=\$124\).

\textbf{S:} Mary cleaned \(6\) homes and had \(\$124\) left.

\textbf{T:} Recalculate the remainder: \(300-\$276=\$24\).

\textbf{S:} Mary cleaned \(6\) homes and had \(\$24\) left.

\textbf{T:} \actionbadge{warnbg}{warnfg}{Over-intervention} Let's try it again: \(300-\$276=\$24\). 

--- \textit{{Student gets confused and misguided, and teacher continues intervention}}
&
The baseline first accepts an incorrect remainder, then repeatedly intervenes even after the student reaches the correct answer. It continues intervening till set budget. \\

\cmidrule(lr){2-4}

&
\actionbadge{modelbg}{modelfg}{DICE}
&
\textbf{S:} Mary could clean \(7\) whole homes since \(46\times7=322\), and have \(\$18\) left over.

\textbf{T:} \actionbadge{repairbg}{repairfg}{Arithmetic repair} Let's recheck the division of \(\$300\) by \(\$46\). How many whole homes can Mary clean with \(\$300\)?

\textbf{S:} Mary can clean \(6\) whole homes, since \(46\times6=\$276\).

\textbf{T:} \actionbadge{acceptbg}{acceptfg}{Accept and advance} That's correct. Please continue to the next step.

\textbf{S:} The leftover is \(300-276=\$24\).

\textbf{T:} \actionbadge{acceptbg}{acceptfg}{Accept and advance} That's correct.
&
\textsc{DICE} applies one targeted arithmetic repair, then accepts once the student is on track. \\

\midrule

\multirow{2}{=}{\textbf{False acceptance}
\vspace{1mm}

\textbf{\textit{Problem}:} A library bookcase held \(825\) books. On Monday \(247\) were borrowed; on Tuesday \(198\) were returned; on Wednesday \(156\) more were borrowed. How many books are on the shelf at the end of Wednesday?

\textbf{\textit{Answer}:} \(620\).}
&
\actionbadge{modelbg}{modelfg}{SocraticLM}
&
\textbf{S:} \(825-247+198-156=420\). So there are \(420\) books remaining.

\textbf{T:} \actionbadge{badbg}{badfg}{False acceptance} That's correct! You've successfully calculated the number of books left on the shelf at the end of Wednesday. Well done!

\textbf{S:} Thank you for your encouragement! It's great to see I was able to solve this problem correctly.

\textbf{T:} Keep up the good work!
&
The baseline validates the wrong answer \(420\), reinforcing an incorrect trajectory instead of repairing the arithmetic. \\

\cmidrule(lr){2-4}

&
\actionbadge{modelbg}{modelfg}{DICE}
&
\textbf{S:} \(825-247+198-156=420\). So there are \(420\) books remaining.

\textbf{T:} \actionbadge{repairbg}{repairfg}{Arithmetic repair} Let's recheck the subtraction and addition step by step. You started with \(825\) books. After Monday, \(247\) were borrowed, so you did \(825-247\). Can you verify that calculation?

\textbf{S:} \(825-247=578\). Then \(578+198=776\), and \(776-156=620\).

\textbf{T:} \actionbadge{acceptbg}{acceptfg}{Accept and advance} That's correct.
&
\textsc{DICE} detects the arithmetic error, repairs the intermediate step, and accepts only after the student reaches the correct answer. \\

\bottomrule
\end{tabularx}
\end{table}

\section{Limitations and Future Work}
\label{sec:limitations}

\noindent\textbf{Limitations.}
This work evaluates intervention-calibrated tutoring through controlled LLM--LLM simulation rather than human-subject studies, so our results should be interpreted as evidence of improved simulated tutoring behavior, not direct evidence of long-term human learning gains such as retention, transfer, or student confidence. However, this scope is deliberate: estimating Intervention Value requires counterfactual comparison between intervention and non-intervention at the same tutoring state, which is difficult to obtain at scale with human learners and high learner variability. Simulation lets us compare policies on identical states, budgets, and student models, and enables dense per-decision analysis across \textsc{DICE-Bench}. Our benchmark is also limited to mathematical word problems with skill-preserving difficulty variants; while the intervention question is general, our current implementation relies on math-specific subgoal decomposition, reference answers, and exact-answer verification. Finally, IV estimation uses model-based rollouts and an LLM judge, which may inherit simulator and evaluator biases, although the controlled comparison partially mitigates this by evaluating intervention and non-intervention arms under the same simulator, judge, and rubric.

\noindent\textbf{Future work.}
Future work should extend \textsc{DICE} beyond math toward general reasoning tutors for programming, scientific reasoning, logic, and commonsense problem solving, where progress and acceptance may require domain-specific verification such as test execution, derivation checking, or factual validation. Another direction is continual and adaptive tutoring: online IV refinement could update intervention values from new tutor--student interactions, allowing policies to adapt to different student populations, curricula, and error patterns without full retraining. Finally, the controlled simulation framework developed here provides a tractable foundation for human-subject studies, where \textsc{DICE}'s explicit action interface can support inspection, auditing, and adjustment of intervention decisions before measuring retention, transfer, and student confidence in real learning settings.

\section{Broader Impact}
\label{apdx:broader_impact}

\textsc{DICE} aims to make automated tutoring more intervention-aware, which could improve the scalability and efficiency of personalized educational support. By explicitly modeling when a tutor should intervene, the framework may help reduce unnecessary feedback, preserve student agency, and make limited tutoring budgets more effective. At the same time, LLM tutors can make incorrect pedagogical decisions, over-trust student answers, or provide misleading guidance, especially when evaluated only with simulated students. These risks are particularly important in educational settings, where incorrect feedback can reinforce misconceptions. Deployment with real learners should include human oversight, careful domain validation, and monitoring for unequal performance across student groups.


\section{List of Abbreviations}
\label{app:abbreviations}

\begin{center}
\scriptsize
\begin{tabular}{@{}p{0.18\textwidth}p{0.74\textwidth}@{}}
\toprule
\textbf{Notation} & \textbf{Description} \\
\midrule
\textbf{CLT} & Cognitive Load Theory \\
\textbf{CoT} & Chain-of-Thought \\
\textbf{DAG} & Directed Acyclic Graph \\
\textbf{FA} & False Acceptance rate \\
\textbf{IV} & Intervention Value \\
\textbf{KL} & Kullback--Leibler \\
\textbf{LLM} & Large Language Model \\
\textbf{OInt.} & Over Intervention rate \\
\textbf{SFT} & Supervised Fine-tuning \\
\bottomrule
\end{tabular}
\end{center}



\end{document}